\documentclass[twoside]{article}

%\usepackage{aistats2022}
% If your paper is accepted, change the options for the package
% aistats2022 as follows:
%
\usepackage[accepted]{aistats2022}
\usepackage[T1]{fontenc}
\usepackage{aecompl}
\usepackage[round, sort]{natbib}
%
% This option will print headings for the title of your paper and
% headings for the authors names, plus a copyright note at the end of
% the first column of the first page.

% If you set papersize explicitly, activate the following three lines:

\setlength{\pdfpageheight}{11in}
\setlength{\pdfpagewidth}{8.5in}

% If you use natbib package, activate the following three lines:
%\usepackage[round]{natbib}
%\renewcommand{\bibname}{References}
%\renewcommand{\bibsection}{\subsubsection*{\bibname}}

% If you use BibTeX in apalike style, activate the following line:
%\bibliographystyle{apalike}

\usepackage{hyperref}       % hyperlinks
\usepackage{url}            % simple URL typesetting
\usepackage{booktabs}       % professional-quality tables
\usepackage{amsfonts}       % blackboard math symbols
\usepackage{nicefrac}       % compact symbols for 1/2, etc.
\usepackage{microtype}      % microtypography
\usepackage{amsmath}
\usepackage{ amssymb }
\usepackage{xcolor}         % colors
\usepackage{ graphicx }
\usepackage{subfig}
\usepackage{graphbox}
\usepackage{amsthm}
\usepackage{multirow}

\def\AM{{\mathcal A}}

\def\MM{{\mathcal M}}

\def\PM{{\mathcal P}}
\def\SM{{\mathcal S}}

\def\EB{{\mathbb E}}

\def\T{\mathcal{T}}

\newcommand{\ie}{\textit{i}.\textit{e}.}
\newcommand{\eg}{\textit{e}.\textit{g}.}

\def\proj{\mathop{\rm Proj}}
\def\argmax{\mathop{\rm argmax}}

\newtheorem{thm}{Theorem}%[section]
%[section]
\newtheorem{lem}{Lemma}%[section]
%[section]
\newtheorem{remark}{Remark}%[section]

\begin{document}

% If your paper is accepted and the title of your paper is very long,
% the style will print as headings an error message. Use the following
% command to supply a shorter title of your paper so that it can be
% used as headings.
%
%\runningtitle{I use this title instead because the last one was very long}

% If your paper is accepted and the number of authors is large, the
% style will print as headings an error message. Use the following
% command to supply a shorter version of the authors names so that
% they can be used as headings (for example, use only the surnames)
%
%\runningauthor{Surname 1, Surname 2, Surname 3, ...., Surname n}

\twocolumn[

\aistatstitle{Federated Reinforcement Learning\\ with Environment Heterogeneity}
\vspace{-0.2in}
\aistatsauthor{ 
Hao Jin \\ jin.hao@pku.edu.cn \\ Peking University 
\And 
Yang Peng \\ pengyang@pku.edu.cn \\ Peking University \And
Wenhao Yang \\ yangwenhaosms@pku.edu.cn \\ Peking University }
\vspace{0.1in}
\aistatsauthor{ 
Shusen Wang \\ shusenwang@xiaohongshu.com \\ Xiaohongshu Inc. 
\And 
Zhihua Zhang \\ zhzhang@math.pku.edu.cn \\ Peking University }
\vspace{0.1in}
]

\begin{abstract}
    We study a Federated Reinforcement Learning (FedRL) problem in which $n$ agents collaboratively learn a single policy without sharing the trajectories they collected during agent-environment interaction.
    % Compared with the existing work on FedRL, we stress the difference between state-transitions in different environments.
    %Different from the existing work on FedRL, 
    %In this paper, 
    We stress the constraint of environment heterogeneity, which means $n$ environments corresponding to these $n$ agents have different state transitions.
    To obtain a value function or a policy function which optimizes the overall performance in all environments, we propose two federated RL algorithms, \texttt{QAvg} and \texttt{PAvg}.
    We theoretically prove that these algorithms converge to suboptimal solutions, while such suboptimality depends on how heterogeneous these $n$ environments are.
    Moreover, we propose a heuristic that achieves personalization by embedding the $n$ environments into $n$ vectors.
    The personalization heuristic not only improves the training but also allows for better generalization to new environments.
\end{abstract}

\section{Introduction}

In recent years, reinforcement learning (RL) \cite{sutton1998introduction} has made unprecedented progresses in solving challenging problems such as playing Go game \cite{hessel2018rainbow,silver2016mastering,silver2017mastering} and controlling robots \cite{fan2018surreal,levine2016end}.
Traditionally, when handling such problems, one typically assumes that the environment has a fixed state transition.
However, in some real-life applications, an agent is expected to simultaneously deal with different state transitions in multiple environments.
For example, a drone is expected to perform well under different weather conditions of the physical environment (\eg, wind speed and wind direction), which may affect the state transition.
In this way, the learning of the drone policy falls beyond the traditional assumption mentioned earlier.

In this paper, $n$ agents are assumed to be located in $n$ environments which have the same state space $\SM$, action space $\AM$, reward function $R$, but different state transitions $\{\PM_i\}_{i=1}^n$.
After incorporating environment heterogeneity into FedRL, we are mainly concerned with the following two problems.
First, it is natural to ask how to learn a single policy performing uniformly well in these $n$ environments \cite{killian2017robust,doshi2016hidden}.
However, any single policy is inevitably suboptimal compared with the optimal policy in each environment because of the environment heterogeneity.
Second, we wonder how to additionally develop a \textit{personalized} policy in each environment, which is better than the globally learned policy.
To address these issues, collaboration among these $n$ agents is necessary: interaction with any single environment is limited in diversity to learn for all $n$ environments; samples from each individual environment are also limited in quantity to learn a locally optimal policy.
Therefore, it is important to figure out how to achieve collaboration among $n$ agents in the setting of FedRL when deriving efficient solutions to these two issues.
It is worth noting that we additionally do not allow agents to communicate their interactions with individual environments in order to protect privacy embedded in their local experiences.

% Effectiveness of these approaches can be well justified within the traditional framework, where the agent is motivated to maximize its cumulative rewards in a given environment.
% However, such a theoretical framework is always challenged in real-life applications: target policy is expected to simultaneously perform well in more than one environment \cite{killian2017robust,doshi2016hidden}; data collection is distributed \cite{lowe2017multi} and communication is constrained \cite{sakuma2008privacy,wang2019privacy} because of privacy issues \cite{abe2004cross,cogill2006approximate}.
% These factors add great difficulties in deriving efficient algorithms with theoretical justification.

The setting of FedRL with environment heterogeneity is common in real-life applications.
Smart home devices are deployed in families with different using preference and habits, while service providers are interested in how to provide better experience via improving the policy loaded in these devices.
Viewing the policy as a RL agent, users with different using habits can be regarded as environments with different state transitions, which means they may response differently even to the same action.
Moreover, data collected in any certain device is usually not enough in the application to independently learn a reliable policy, while  images and audios collected by each device are sometimes inaccessible for the service providers out of privacy issues. 
In this way, the policy training for these smart devices fits into the framework of FedRL with environment heterogeneity, and the second problem of personalization in our setting perfectly describes the dilemma of service providers in improving performance of different users without accessing their data.
% Collaboration among agents in our setting is referred to sharing their models instead of communicating their experience (\ie, their observations, actions, or rewards) because of privacy issues in certain real-life applications.
% A feasible approach is letting the $n$ agents share models or part of the models via communication.
% In this way, all the experience collected by the $n$ agents are used for learning one policy or $n$ correlated policies.
% This approach known as \textit{federated reinforcement learning (FedRL)}. 
% Note that our work is substantially different from the existing FedRL methods, e.g., \cite{zhuo2019federated}.
% They assume the $n$ environments have the same state-transitions, whereas we do not make such a strong assumption.
% Furthermore, in FedRL, the agents share models but do not share 
% The motivation is to protect privacy.
% Consider such an application.
% We regard robots or smart home devices as agents that are deployed into people's homes.
% The goal is to improve the policy so that the robots or smart home devices will provide users with better experience.
% The persons they interact with can be regarded as environments.
% Since different people have different behaviors, the environments have different state-transition.
% This justifies our setting that the $n$ environments are different.
% The data collected by a single agent are not enough to train a policy, and collaborative RL is therefore necessary.
% What the agents observe, e.g., images and audios, are users' privacy and should not be shared.

To learn a uniformly good policy, we follow the approach of letting the agents share their models and propose two model-free algorithms, \texttt{QAvg} and \texttt{PAvg}.
These algorithms iteratively perform local updates on the agent side and global aggregation on the server side.
Different from the extant work in FedRL, we emphasize the role of environment heterogeneity and theoretically analyze effectiveness of these algorithms.
Our theories show that both \texttt{QAvg} and \texttt{PAvg} converge to a suboptimal solution and the suboptimality is affected by the degree of environment heterogeneity in FedRL.
Based on theoretical effectiveness of \texttt{QAvg} and \texttt{PAvg}, we also derive \texttt{DQNAvg} and \texttt{DDPGAvg} as extensions of methods with Q networks and policy networks, \ie, \texttt{DQN} and \texttt{DDPG}.
Moreover, we carry out numerical experiments on several tabular environments to verify theoretical results of \texttt{QAvg} and \texttt{PAvg}, and compare  \texttt{DQNAvg}  (\texttt{DDPGAvg}) with  \texttt{DQN} (\texttt{DDPG}) in harder tasks of control.
% To solve the first problem, we propose two model-free algorithms \texttt{QAvg} and \texttt{PAvg}, which are federated versions of Q-learning and policy gradient.
% \texttt{QAvg} and \texttt{PAvg} iteratively perform local updates on the agents side and global aggregation on the server side.
% We theoretically analyze the convergence of tabular \texttt{QAvg} and \texttt{PAvg} and show that they converge to suboptimal.
% Our theories show that the heterogeneity in the $n$ environments affects the performance.
% We empirically evaluate \texttt{QAvg} and \texttt{PAvg} under tabular setting and deep RL setting.

To  achieve personalization in different environments, we propose a heuristic with slight modification to structures of \texttt{DQNAvg} and \texttt{DDPGAvg}.
Specifically, we embed each environment into a low-dimension vector to capture its specific state transition.
During the training of FedRL, these $n$ agents periodically aggregate their parameters except their embedding layers.
Along with the learned aggregated network, the private embedding layer enables each agent to achieve better performance in its individual environment.
Such personalization heuristic also enables us to generalize the learned model in FedRL to any novel environment.
Instead of updating all parameters of the model, we only need to adjust the low-dimension embedding layer for the novel environment.
Empirical experiments have shown that our proposed heuristic not only improves training performance of the learned models in \texttt{DQNAvg} and \texttt{DDPGAvg}, but also helps to achieve stable generalization within few updates in the novel environment.
% To be specific, we embed each environment into a vector and learn $n$ policy networks.
% The policy networks, except the embedding layers, are shared.
% When the learned policy is deployed into a new environment, we need only to learn the embedding of the environment.
% Since the embedding has a small number of parameters, the learning require only a small number of parameters.

% \red{Remove this paragraph.}
% Furthermore, we propose heuristics on how to achieve both personalization in tasks of FedRL and generalize the global model to a new environment within limited steps of training.
% To characterize different environments involved in FedRL, we utilize embedding technique \cite{grbovic2018real,lee2016personalizing} \red{perhaps the citation here is bad for us. It sounds like we simply applied the existing methods.} to model them as low-dimension vectors and incorporate them into state observations.
% With assistance of such environment embeddings, the global model achieves personalization in tasks of FedRL.
% Since the global model trained in this way is believed to capture environment heterogeneity, we only have to learn corresponding embeddings when confronted with a novel environment.
% The introduction of environment embeddings has been empirically shown to stabilize generalization and outperform the methods of adjusting the entire policy.

In summary, this paper offers the following main contributions:
\begin{itemize}
    \item We propose \texttt{QAvg} and \texttt{PAvg} to solve the task of federated reinforcement learning (FedRL) with environment heterogeneity, where environments have different state transitions.
    \item We theoretically analyze the convergence of \texttt{QAvg} and \texttt{PAvg}, discuss relations between their convergent performance and environment heterogeneity in FedRL, and extend the averaging strategy to derive \texttt{DQNAvg} and \texttt{DDPGAvg} for more complicated environments.
    \item We propose a heuristic idea to achieve personalization in FedRL, which utilizes embedding layers to capture the specific state transition in individual environment.
    We have also empirically shown that such heuristic helps to generalize the learned model in FedRL to new environments in a stable and easy way.
\end{itemize}

\section{Related Work}
\paragraph{Classical RL methods.}
Traditionally, reinforcement learning (RL) assumes the environment has a fixed state transition and seeks to maximize the cumulative rewards in the environment \cite{sutton1998introduction,watkins1992q}.
The environment is usually modelled as a standard MDP, $\MM=\langle \SM,\AM,R,\PM,\gamma\rangle$ \cite{bellman1957dynamic,bertsekas1995dynamic}.
The objective function is formulated as
\begin{equation*}
\begin{aligned}
    g_{d_0}(\pi)=\mathbb{E} \bigg[\sum_{t=1}^\infty\gamma^t R(s_t,a_t) &\; \bigg| \; s_0\sim d_0, a_t\sim\pi(\cdot|s_t),\\ &s_{t+1}\sim\PM_i(\cdot|s_t,a_t) \bigg],
\end{aligned}
\end{equation*} 
where $d_0$ represents the initial state distribution.
To solve the problem, there are many model-free methods such as Q-learning \cite{watkins1992q} and policy gradient (PG) \cite{sutton1999policy}.
Under the setting of a standard MDP, prior works \cite{sutton1998introduction,agarwal2019theory} have proved their convergence to the optimal policy.
%\cite{sutton1998introduction} showed that the Bellman update of Q-Learning corresponds a contracting mapping whose fixed point satisfies the Bellman equation of optimality.
%\cite{agarwal2019theory} analyzed the convergence of policy gradient methods.

% \paragraph{Multi-task reinforcement learning.}
% \red{This paragraph needs major revision. Directly state what MTRL is and how it is different from the standard MDP and our work.}
% Multi-Task Reinforcement Learning (MTRL) covers a wide range of RL problems with multiple different environments.
% MTRL has variours objectives, \eg, to find a policy mastering a group of diverse tasks \cite{hessel2019multi}, to improve generalization of learned policy beyond training environments \cite{oh2017zero,parisotto2015actor}, to stabilize and accelerate policy learning in different tasks via communication \cite{teh2017distral,mnih2016asynchronous}, \etc
% Various technologies have been proposed to design efficient algorithms in MTRL, \eg, introducing a regularization term w.r.t.\ the distilled policy \cite{teh2017distral}, and utilizing the off-policy correction methods \cite{espeholt2018impala}.
% While the formulation of MTRL is very useful in practice, difference between environments is hard to formalized.
% It is therefore difficult to theoretically analyze algorithms in the context of MTRL.

\paragraph{HiP-MDP and MTRL.}
FedRL is closely related to Hidden Parameter Markov Decision Processes (HiP-MDP) \cite{doshi2016hidden,killian2017robust} and Multi-Task Reinforcement Learning (MTRL) \cite{teh2017distral,espeholt2018impala,mnih2016asynchronous}.
HiP-MDP assumes the existence of latent variables which decide the state transition of an environment.
In \cite{doshi2016hidden,killian2017robust} it explicitly learns the natural distribution of latent variables with a generative network and considers Bayesian reinforcement learning.
FedRL is similar to HiP-MDP when talking about the source of environment heterogeneity, but it additionally has constraints on privacy issues, which does not allow agents to share their collected experiences.
MTRL assumes that the $n$ agents located in different environments are different and they perform different tasks.
FedRL can be viewed as a special case of MTRL where the agents perform the same task.
While in \cite{zeng2020decentralized} it also concentrates on methods of policy averaging, our work additionally focuses on the specific personalization problem in the federated setting.

% To clarify the setting of FedRL with environment heterogeneity, we have to discuss its relations with other RL problems with either different environments or multiple agents.
% Based on the assumption that there exist latent variables deciding state-transitions of the environment, HiP-MDP and its variants \cite{doshi2016hidden,killian2017robust} explicitly learn the natural distribution of latent variables with a generative network and apply Bayesian reinforcement learning.
% By contrast, FedRL with environment heterogeneity faces many practical constraints, such as constraints on experience sharing, which remarkably distinguishes our work from the HiP-MDPs.
% Multi-Task Reinforcement Learning (MTRL) covers a wide range of RL problems involving multiple agents in multiple environments with different tasks \cite{teh2017distral,espeholt2018impala,mnih2016asynchronous}.
% FedRL with environment heterogeneity can be viewed as a special case of MTRL, while tasks of different agents in FedRL are rather similar with each other.
% Yet the specific formulation of FedRL enables us to theoretically analyze the proposed algorithms, which is lacked for most MTRL methods.
% Moreover, the personalized extension of our algorithms utilizes the formulation of FedRL to achieve fast and stable generalization. (Stress the importance of FedRL specific formulation).

\paragraph{Federated Learning.}
FL, also known as federated optimization, allows local devices to collaboratively train a model without data sharing \cite{mahajan2018efficient}.
To reduce the communication cost in FL, many communication-efficient algorithms have been proposed, \eg, \texttt{FedAvg} \cite{mahajan2018efficient} and \texttt{FedProx} \cite{sahu2018federated}.
The communication-efficient FL algorithms let each client locally update the model using its local data and periodically aggregate the local models.
Our proposed algorithms bear a resemblance with the FL algorithms: an agent performs multiple local updates between two communications.
Similar methods have also been previously studied in contexts of FedRL \cite{liu2019lifelong,zhuo2019federated,wang2020federated,nadiger2019federated}:
\cite{wang2020federated} simply analyzes the convergence speed of policy gradient in FedRL tasks without considering environment heterogeneity while \cite{liu2019lifelong,zhuo2019federated,nadiger2019federated} mainly concentrates on applications in specific scenarios.
Moreover, \cite{nadiger2019federated} considers personalization of FedRL in a specific application.

\paragraph{Personalized Federated Learning.}
FL seeks to learn a single model that performs uniformly well on all $n$ local datasets, while personalized FL aims to learn $n$ models specialized for the $n$ local datasets.
Many personalized FL methods have been developed:
\cite{arivazhagan2019federated} designed a neural network architecture with personalization layers which are not shared;
\cite{mansour2020three,deng2020adaptive} viewed the global model as the interpolation of local models;
\cite{bui2019federated} introduced a technical called private embedding.
In this paper, we extend personalization in federated learning to the context of FedRL, which means we aim to additionally learn $n$ different policies for each environment with the collaboration among agents.
\section{Federated Reinforcement Learning}

Suppose $n$ agents respectively interact with $n$ independent environments.
The environments have different state transitions $\{\PM_i\}_{i=1}^n$ but the same state space $\SM$, action space $\AM$, and reward function $R$.
These environments are modelled as Markov Decision Processes (MDPs), $\MM_i=\langle \SM,\AM,R,\PM_i,\gamma\rangle$, for $i = 1, \cdots , n$.

The goal of Federated Reinforcement Learning (FedRL) is letting the $n$ agents jointly learn a policy function or a value function that performs uniformly well across the $n$ environments.
Due to privacy constraints, the $n$ agents cannot share their collected experience.
Policy-based FedRL can be formulated as the following optimization problem:
\begin{equation}
\begin{aligned}
    \label{eq:gd0}
    \max_{\pi} \;
    \Bigg\{
    g_{d_0}(\pi) &\; \triangleq \; \frac{1}{n}\sum_{i=1}^n\EB \Bigg [ \sum_{t=1}^\infty\gamma^t R(s_t,a_t)|s_0\sim d_0,\\ & a_t\sim\pi(\cdot|s_t), s_{t+1}\sim\PM_i(\cdot|s_t,a_t) \Bigg]
    \Bigg\} ,
\end{aligned}
\end{equation}
where $d_0$ represents the common initial state distribution in these $n$ environments.
If state transitions $\{\PM_i\}_{i=1}^n$ are the same, the optimal policy $\pi^*$ is independent of $d_0$ \cite{bellman1959functional}.
However, if these state transitions are different, then the solution to Eq.~\eqref{eq:gd0} actually depends on $d_0$.

\begin{thm}
\label{fedrl-example}
There exists a task of FedRL with the following properties.
Assume that $\pi^\star \in \argmax_{\pi} g_{d_0} (\pi)$.
There exist another initial state distribution $d_0'$ and another policy $\tilde{\pi}$ such that $g_{d_0}(\tilde{\pi})<g_{d_0}(\pi^\star)$, but $g_{d_0'} (\tilde{\pi}) > g_{d_0'} (\pi^\star ) $. 
\end{thm}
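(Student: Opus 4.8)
The plan is to construct an explicit, minimal counterexample with $n=2$ environments, a tiny state space, and deliberately "mirror-image" transition dynamics so that the averaged objective $g_{d_0}$ genuinely trades off performance in the two environments against each other. Concretely, I would take $\SM = \{s_1, s_2, s_{\text{good}}, s_{\text{bad}}\}$ (or something even smaller — two transient states and two absorbing states), a binary action set $\AM = \{a_1, a_2\}$, and a reward function that only pays off at $s_{\text{good}}$. In environment $1$, action $a_1$ from $s_1$ leads toward $s_{\text{good}}$ while $a_1$ from $s_2$ leads toward $s_{\text{bad}}$; in environment $2$, the roles of $s_1$ and $s_2$ are swapped. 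Since a policy $\pi$ must commit to a distribution over actions \emph{per state} and is shared across both environments, no policy can simultaneously be optimal from both $s_1$ and $s_2$ in both environments — whatever $\pi$ does at $s_1$ helps one environment and hurts the other, and symmetrically at $s_2$.

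The key steps, in order: (1) Fix the MDP data above and compute $g_{d_0}(\pi)$ as a function of $\pi(a_1\mid s_1)$ and $\pi(a_1\mid s_2)$ for a generic initial distribution $d_0$ supported on $\{s_1,s_2\}$; because the chains are essentially absorbing after one step, these are low-degree (in fact affine or bilinear) expressions, so the optimization is elementary. (2) Choose $d_0$ to put most of its mass on $s_1$, say $d_0 = (1-\epsilon)\delta_{s_1} + \epsilon\,\delta_{s_2}$ with small $\epsilon$; then the maximizer $\pi^\star$ is forced to favor the action good-for-$s_1$ at state $s_1$ — averaging over the two environments, the dominant term is the $s_1$ mass, and I would verify $\pi^\star$ is (near-)deterministic in the direction dictated by $s_1$. (3) Choose the second distribution $d_0' = \epsilon\,\delta_{s_1} + (1-\epsilon)\delta_{s_2}$, the mirror image, so that the maximizer there flips, and exhibit a specific $\tilde\pi$ (e.g. the maximizer of $g_{d_0'}$, or just the deterministic policy mirroring $\pi^\star$) for which $g_{d_0}(\tilde\pi) < g_{d_0}(\pi^\star)$ by the optimality/strict-suboptimality established in step (2), while $g_{d_0'}(\tilde\pi) > g_{d_0'}(\pi^\star)$ by the symmetric argument. (4) Conclude by plugging in concrete numbers (a fixed $\gamma$, fixed transition probabilities, fixed $\epsilon$) so the inequalities are verified by a short arithmetic check rather than an asymptotic argument.

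The main obstacle is not conceptual but bookkeeping: I need to make the two-state "competition" genuinely strict, i.e. ensure that the optimal action at $s_1$ and the optimal action at $s_2$ are actually \emph{coupled} in $\pi$ so that the $d_0$-maximizer and the $d_0'$-maximizer differ in a way that is detectable through the \emph{averaged} objective. A naive construction where the dynamics from $s_1$ and from $s_2$ are independent would let a single policy be optimal for every $d_0$ simultaneously (just be optimal state-by-state), which is exactly the $d_0$-independence phenomenon the theorem is meant to break. The fix — and the one subtle design choice — is to have the two environments disagree about which action is good \emph{at each state}, so that the per-environment-optimal action at $s_1$ in environment $1$ equals the per-environment-optimal action at $s_1$ in environment $2$'s \emph{mirror} role, forcing the shared policy to hedge; the initial distribution then tips the hedge one way or the other. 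Once the dynamics are set up with this mirror symmetry, verifying the two strict inequalities is a routine finite computation, and I would present it as such.
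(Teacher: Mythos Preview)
Your mirror-image construction, as written, does not produce the required $d_0$-dependence. With $\SM=\{s_1,s_2,s_{\text{good}},s_{\text{bad}}\}$, absorbing targets, and environment~2 obtained from environment~1 by swapping the roles of $s_1$ and $s_2$, set $p=\pi(a_1\mid s_1)$, $q=\pi(a_1\mid s_2)$ and let $C$ be the discounted payoff of reaching $s_{\text{good}}$. Then $V_1^\pi(s_1)=pC$ and $V_2^\pi(s_1)=(1-p)C$, so $\bar V^\pi(s_1)=C/2$ is \emph{constant} in $\pi$; symmetrically $\bar V^\pi(s_2)=C/2$. Hence $g_{d_0}(\pi)$ is constant for every $d_0$, every policy is optimal, and the strict inequalities you need cannot hold. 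Breaking the mirror symmetry with unequal transition probabilities does not rescue the argument either: as long as each transient state absorbs in one step, $\bar V^\pi(s_1)$ depends only on $p$ and $\bar V^\pi(s_2)$ only on $q$, so the maximizer factors state-by-state and is the same for every initial distribution. This is precisely the decoupling you flag as the obstacle, and your proposed ``fix'' (having the two environments disagree about the good action at each state) is what makes the averaged value flat --- it does not couple $p$ and $q$.

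The mechanism the paper exploits is different: it arranges that the \emph{occupancy} of a state differs across environments in a way that depends on where you start. In their two-state example, $s_1$ is reachable from $s_0$ in environment~1 but not in environment~2. Starting from $s_0$, the policy at $s_1$ therefore influences only environment~1's return, so the unique best choice at $s_1$ is the one favored by environment~1. Starting from $s_1$, both environments feel the policy at $s_1$, and now taking the action that escapes to $s_0$ (bad for environment~1, very good for environment~2) becomes preferable. Your one-step-absorbing chain has no such asymmetric reachability and no multi-step paths, so $\bar V^\pi(s_1)$ can never depend on $q$. To repair the plan you would have to drop ``essentially absorbing after one step'' and let the transient states communicate with environment-dependent probabilities; at that point the computation is no longer affine in $(p,q)$ and the symmetry heuristic stops being a reliable guide.
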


Theorem \ref{fedrl-example} shows that there does not exist an optimal policy $\pi^\star$ that dominates all policies for all $d_0$.
We denote the solution to \eqref{eq:gd0} by $\pi^\star_{d_0}$ which means the initial state distribution affects the optimal policy.

\section{Algorithms: \texttt{QAvg} and \texttt{PAvg}}

We propose two novel FedRL algorithms, \texttt{QAvg} and \texttt{PAvg}, for learning a value function and a policy function, respectively.
We discuss tabular versions of \texttt{QAvg} and \texttt{PAvg}; versions of neural networks, such as \texttt{DQNAvg} and \texttt{DDPGAvg}, can be similarly implemented.
These algorithms alternate between local computation and global aggregation.
Specifically, each agent locally updates its value function or policy function for multiple times, and then the server averages these $n$ functions of all agents.
To improve the communication efficiency, the local updates are performed multiple times between two communications.

\texttt{QAvg} learns an $|\SM|\times|\AM|$ table by alternating between local updates and global aggregations.
For $k= 1 , \cdots , n$, the $k$-th agent performs the following local update:
\begin{equation*}
\begin{aligned}
    Q^k_{t+1} \big(s, a \big)
    \; \leftarrow \;
    \big(1 {-} \eta_t \big) \cdot Q^k_t \big( s, a \big)  + \,
    \eta_t \cdot \Big[ R \big(s, a \big) \\
    + \gamma \sum_{s'} \PM_k(s'|s,a) \max_{a' \in \AM} Q_t^k \big( s' , a' \big) \Big].
\end{aligned}
\end{equation*}
In the equation, the superscript $k$ indexes the environment $\MM_k$, and the subscript $t$ indexes the iteration.
After several local updates, there is a global aggregation:
\begin{align*}
    \bar{Q}_t(s,a) &\leftarrow \frac{1}{n}\sum_{i=1}^n Q_t^i(s,a), \; \forall \, s,a;
    \\
    Q_t^i(s,a) &\leftarrow \bar{Q}_t(s,a), \; \forall \, s,a,k .
\end{align*}
Throughout, only Q tables are communicated, while agents do not share their collected experience.

\texttt{PAvg} seeks to learn a $|\SM|\times|\AM|$ table, $\bar{\pi} (a|s) $.
Each agent independently repeats the local update for multiple times:
\begin{align*}
    \tilde{\pi}_{t+1}^k(a|s)&\leftarrow \pi_t^k(a|s)+\frac{\partial g_{d_0,k}(\pi_t^k)}{\partial \pi(a|s)}, \; \forall \, s,a,k;\\
    \pi_{t+1}^k(\cdot|s)&\leftarrow \proj_{\Delta(\AM)}(\tilde{\pi}_{t+1}^k(\cdot|s)), \; \forall  \, s,a,k.
\end{align*}
Here, $g_{d_0,k}(\pi)=\EB[\sum_{t=1}^\infty\gamma^t R(s_t,a_t)|s_0\sim d_0, a_t\sim\pi(\cdot|s_t), s_{t+1}\sim\PM_k(\cdot|s_t,a_t)]$ is the $k$-th agent's objective function, and $\proj_{\Delta(\AM)}$ is the projector onto the simplex of action space $\Delta(\AM)$.
Then, there is a global aggregation after several local updates:
\begin{align*}
    \bar{\pi}_t(a|s) &\leftarrow \frac{1}{n}\sum_{i=1}^n \pi_t^i(a|s), \; \forall \, s,a;
    \\
    \pi_t^i(a|s) &\leftarrow \bar{\pi}_t(a|s), \; \forall \, s,a,k .
\end{align*}
Similar to \texttt{QAvg}, agents in \texttt{PAvg} only share their policy functions throughout the training process.

\section{Theoretical Analyses}
\label{sec:theory}
In this section we prove that both \texttt{QAvg} and \texttt{PAvg} converge to suboptima whose performance across the $n$ environments are theoretically guaranteed.
We also discuss how the suboptimality of convergent policies is affected by the environment heterogeneity in FedRL.

\subsection{Notation}
\label{theo_pre}

% Although the objective function of FedRL is the sum of objectives in the $n$ traditional RL problems, the optimality theory in FedRL is quite different from those in the traditional RL problem with a fixed environment (see Theorem \ref{fedrl-example}). 
% Therefore, we are next to introduce some properties about FedRL for better comprehension of the following theoretical analysis.

\textbf{Imaginary environment $\MM_I$.} 
Let $\PM_1, \cdots , \PM_n$ be the state transition functions of the $n$ environments.
Define the average state transition:
\begin{equation*}
    \bar{\PM}(s'|s,a)=\frac{1}{n}\sum_{k=1}^n\PM_k(s'|s,a),~\forall s,s'\in\SM,~\forall a\in\AM.
\end{equation*}
To analyze the convergence of proposed algorithms, we introduce the imaginary environment, $\MM_I=\langle\SM,\AM,R,\bar{\PM},\gamma\rangle$.
As its name suggests, the imaginary environment $\MM_I$ does not have to be one of the $n$ environments in FedRL, \ie, $\MM_I\notin \{\MM_i\}_{i=1}^n$.

\textbf{Environment heterogeneity.}
In FedRL, different environments $\{\MM_i\}_{i=1}^n$ have different state transitions $\{\PM_i\}_{i=1}^n$.
Intuitively speaking, the closer these state-transitions are, the easier the problem.
To quantify the environment heterogeneity, we define
\begin{align*}
    \kappa_1
    & \triangleq 
    \max_{s, \pi } \sum_{s'}  \sum_{i=1}^n   \bigg|  \PM_i^\pi(s'|s)-\frac{1}{n}\sum_{j=1}^n\PM_j^\pi(s'|s)  \bigg| , \\
    \kappa_2
    & \triangleq 
    \max_{\pi} \frac{1}{n} \sum_{i=1}^{n} \bigg\|\nabla_\pi g_{d_0,i}(\pi)-\frac{1}{n}\sum_{j=1}^{n}\nabla_\pi g_{d_0,j}(\pi) \bigg\|_{2} ,
\end{align*}
where $\PM_k^\pi(s'|s)\triangleq\EB_{A\sim\pi(\cdot|s)}\left[\PM_k(s'|s,A)\right]$.
If the state transitions in FedRL are close to each other, both $\kappa_1$ and $\kappa_2$ are small.

\subsection{Theoretical Analysis of QAvg}
\texttt{QAvg} is the federated version of Q-Learning.
Traditional analysis of Q-learning claims that Q-learning converges to the Q function of optimal policy in that given environment.
Similarly, theoretical analysis of \texttt{QAvg} mainly focuses on convergence performance of the averaged Q function, \ie, $~\bar{Q}_t$ shown in its aggregation at time $t$.

To better understand the convergence of \texttt{QAvg}, we return to the definition of $g_{d_0}(\pi)$.
The objective of FedRL is decomposed as follows:
\begin{equation*}
    g_{d_0}(\pi)=\frac{1}{n}\EB_{S_0\sim d_0}\left[\sum_{i=1}^n V_i^\pi(S_0)\right]=\EB_{S_0\sim d_0}\left[\bar{V}^\pi(S_0)\right],
\end{equation*}
where $\{V_i^\pi\}_{i=1}^n$ are normally defined value functions of policy $\pi$ in the $n$ environments $\{\MM_i\}_{i=1}^n$:
\begin{align*}
    V^\pi_i(s)=\EB_{\pi,\PM_i}\Bigg[\sum_{t=0}^\infty\gamma^tR(s_t,a_t)|s_0=s,a_t\sim\pi(\cdot|s_t)\Bigg],
\end{align*}
and $\bar{V}^\pi$ is the averaged value function $\bar{V}^\pi=\frac{1}{n}\sum_{i=1}^nV_i^\pi$.
The dependence of optimality in FedRL with the initial state distribution $d_0$ indicates that $\bar{V}^\pi$ is not the value function of any environment (otherwise, there exists optimal policy $\pi^*$ independent with $d_0$).

Imaginary environment $\MM_I$ is therefore introduced to element-wisely lower bound the values of $\bar{V}^\pi$.
Specifically, the value function $V^\pi_I$ of the policy $\pi$ in the imaginary environment $\MM_I$ is normally defined as follows:
\begin{equation*}
    V^\pi_I(s)=\EB_{\pi,\bar{\PM}}\left[\sum_{t=0}^\infty\gamma^tR(s_t,a_t)|s_0=s,a_t\sim\pi(\cdot|s_t)\right],
\end{equation*}
and its relationship with the averaged value function $\bar{V}^\pi$ is mainly described in Lemmas \ref{lem:v:lowerbound} and \ref{lem:v:bound}.
As long as the environments $\{\MM_i\}_{i=1}^n$ are not too different from each other, the value function $V^\pi_I$ manages to properly approximate $\bar{V}^\pi$.
\begin{lem} \label{lem:v:lowerbound}
For all state $s$ and policy $\pi$, we have $\bar{V}^\pi (s) \geq V_{I}^\pi (s)$.
\end{lem}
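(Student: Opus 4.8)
\emph{Proof proposal.}
The plan is to display $\bar V^\pi$ as a super-solution of the Bellman policy-evaluation equation of the imaginary environment $\MM_I$ and then let monotonicity do the rest. Write $R^\pi(s)=\sum_a\pi(a|s)R(s,a)$, and for each $i$ let $\T_i^\pi$ be the policy-evaluation operator of $\MM_i$, $(\T_i^\pi V)(s)=R^\pi(s)+\gamma\sum_{s'}\PM_i^\pi(s'|s)V(s')$; let $\T_I^\pi$ be the analogous operator for $\MM_I$, obtained by replacing $\PM_i^\pi$ with $\bar{\PM}^\pi=\frac1n\sum_i\PM_i^\pi$. I would first recall (not reprove) the standard facts: $\T_I^\pi$ is monotone (all its coefficients $\gamma\bar{\PM}^\pi(s'|s)$ are nonnegative) and a $\gamma$-contraction in $\|\cdot\|_\infty$, $V_I^\pi$ is its unique fixed point, and $(\T_I^\pi)^m W\to V_I^\pi$ for every bounded $W$.

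The substantive step is the pointwise super-solution inequality $\T_I^\pi\bar V^\pi\le\bar V^\pi$. Averaging the $n$ Bellman identities $V_i^\pi=\T_i^\pi V_i^\pi$ gives $\bar V^\pi(s)=R^\pi(s)+\gamma\cdot\frac1n\sum_i\sum_{s'}\PM_i^\pi(s'|s)V_i^\pi(s')$, whereas $(\T_I^\pi\bar V^\pi)(s)=R^\pi(s)+\gamma\sum_{s'}\bar{\PM}^\pi(s'|s)\bar V^\pi(s')$. Subtracting, the claim $\bar V^\pi-\T_I^\pi\bar V^\pi\ge 0$ reduces to the pointwise estimate
\[
\frac1n\sum_{i=1}^n\sum_{s'}\PM_i^\pi(s'|s)\,V_i^\pi(s')\;\ge\;\sum_{s'}\Big(\tfrac1n\sum_{i}\PM_i^\pi(s'|s)\Big)\Big(\tfrac1n\sum_{j}V_j^\pi(s')\Big).
\]
After symmetrizing in the pair of indices this is equivalent to $\frac{1}{2n^2}\sum_{i,j}\sum_{s'}\big(\PM_i^\pi(s'|s)-\PM_j^\pi(s'|s)\big)\big(V_i^\pi(s')-V_j^\pi(s')\big)\ge 0$, i.e.\ a Chebyshev-type ``similarly ordered'' condition coupling the way environment $i$'s transition kernel out of $s$ deviates from the average with the way the value function it induces deviates from the average. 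Establishing this is the technical heart of the argument; I would prove it by exploiting the Bellman structure of the $V_i^\pi$ (e.g.\ tracking the deviations $\PM_i^\pi-\bar{\PM}^\pi$ through the value-iteration series for $\bar V^\pi$ and $V_I^\pi$), which is also exactly the place where any additional structural hypothesis on the family $\{\MM_i\}$ would have to enter.

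Granted the super-solution property, the conclusion is immediate: by monotonicity of $\T_I^\pi$ and iteration, $\bar V^\pi\ge\T_I^\pi\bar V^\pi\ge(\T_I^\pi)^2\bar V^\pi\ge\cdots$, and the iterates converge to $V_I^\pi$, so $\bar V^\pi(s)\ge V_I^\pi(s)$ for every $s$, as claimed. I expect the main obstacle to be solely the displayed inequality above — it is the one genuinely new ingredient, while the contraction-mapping bookkeeping around it is routine.
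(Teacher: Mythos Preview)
Your framework---exhibit $\bar V^\pi$ as a super-solution of $\T_I^\pi$ and iterate down to the fixed point---is clean and differs from the paper's route, which expands each $V_i^\pi=\sum_{k\ge0}(\gamma\PM_i^\pi)^kR^\pi$ and then asserts the termwise comparison $\tfrac1n\sum_i(\gamma\PM_i^\pi)^kR^\pi\succcurlyeq(\gamma\bar\PM^\pi)^kR^\pi$ without further argument. But the gap you yourself isolate is real and cannot be closed from the stated hypotheses: your Chebyshev-type inequality is \emph{false} in general. Take two states, $n=2$, $\PM_1^\pi=I$, $\PM_2^\pi=\bigl(\begin{smallmatrix}0&1\\1&0\end{smallmatrix}\bigr)$, $R^\pi=(1,0)^T$. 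Then $V_1^\pi=\tfrac{1}{1-\gamma}(1,0)^T$, $V_2^\pi=\tfrac{1}{1-\gamma^2}(1,\gamma)^T$, and at $s=s_2$ the only cross term in your symmetrized sum is $(-1)\cdot\tfrac{\gamma}{1-\gamma^2}+(+1)\cdot\bigl(-\tfrac{\gamma}{1-\gamma^2}\bigr)<0$.

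Worse, the lemma itself fails in this example: $\bar V^\pi(s_2)=\tfrac{\gamma}{2(1-\gamma^2)}<\tfrac{\gamma}{2(1-\gamma)}=V_I^\pi(s_2)$ for every $\gamma\in(0,1)$. The paper's termwise step breaks on the same data (already at $k=2$: $\tfrac12\bigl((\PM_1^\pi)^2+(\PM_2^\pi)^2\bigr)R^\pi=(1,0)^T$ versus $(\bar\PM^\pi)^2R^\pi=(\tfrac12,\tfrac12)^T$, so the second coordinate violates $\succcurlyeq$). So your closing instinct was exactly right: some additional structural hypothesis on $\{\PM_i\}$---precisely a nonnegativity condition on the covariance you isolated---is required, and neither your outline nor the paper's one-line $\succcurlyeq$ supplies one.
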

\begin{lem} \label{lem:v:bound}
Let $\kappa_1$ be the environment heterogeneity.
For all $s$ and $\pi$,  we have
\begin{align*}
    \Big| \bar{V}^\pi \big( s \big) \, - \, V^\pi_I \big( s \big) \Big|
    \; \leq \;
    \frac{\gamma\kappa_1}{(1-\gamma)^2}.
\end{align*}
\end{lem}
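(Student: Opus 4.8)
The plan is to view $\bar V^\pi$ as an \emph{approximate} fixed point of the policy-evaluation Bellman operator of the imaginary MDP $\MM_I$, control its Bellman residual by the heterogeneity measure $\kappa_1$, and then use the $\gamma$-contraction of that operator to turn the residual bound into the desired error bound. Concretely, let $T_I^\pi$ denote the operator $\big(T_I^\pi v\big)(s)=\sum_a\pi(a|s)R(s,a)+\gamma\sum_{s'}\bar{\PM}^\pi(s'|s)\,v(s')$, where $\bar{\PM}^\pi(s'|s)=\frac1n\sum_i\PM_i^\pi(s'|s)$; it is a $\gamma$-contraction in $\|\cdot\|_\infty$ whose unique fixed point is $V_I^\pi$. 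The standard perturbation inequality for contractions then gives $\big\|\bar V^\pi-V_I^\pi\big\|_\infty\le\frac{1}{1-\gamma}\big\|\bar V^\pi-T_I^\pi\bar V^\pi\big\|_\infty$, so it suffices to bound the residual $\bar V^\pi-T_I^\pi\bar V^\pi$ pointwise by $\frac{\gamma\kappa_1}{1-\gamma}$.

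To bound the residual I would first write out the Bellman equations. Each $V_i^\pi$ satisfies $V_i^\pi(s)=\sum_a\pi(a|s)R(s,a)+\gamma\sum_{s'}\PM_i^\pi(s'|s)V_i^\pi(s')$, so averaging over $i$ gives $\bar V^\pi(s)=\sum_a\pi(a|s)R(s,a)+\frac{\gamma}{n}\sum_i\sum_{s'}\PM_i^\pi(s'|s)V_i^\pi(s')$. Subtracting $T_I^\pi\bar V^\pi(s)$, the reward terms cancel and the residual equals $\frac{\gamma}{n}\sum_i\sum_{s'}\big[\PM_i^\pi(s'|s)V_i^\pi(s')-\bar{\PM}^\pi(s'|s)\bar V^\pi(s')\big]$. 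The key manipulation is the decomposition $\PM_i^\pi V_i^\pi-\bar{\PM}^\pi\bar V^\pi=(\PM_i^\pi-\bar{\PM}^\pi)(V_i^\pi-\bar V^\pi)+\bar{\PM}^\pi(V_i^\pi-\bar V^\pi)+(\PM_i^\pi-\bar{\PM}^\pi)\bar V^\pi$: after summing over $i$ and dividing by $n$, the last two terms vanish because $\frac1n\sum_i(V_i^\pi-\bar V^\pi)=0$ and $\frac1n\sum_i(\PM_i^\pi-\bar{\PM}^\pi)=0$, leaving $\bar V^\pi(s)-T_I^\pi\bar V^\pi(s)=\frac{\gamma}{n}\sum_i\sum_{s'}\big(\PM_i^\pi(s'|s)-\bar{\PM}^\pi(s'|s)\big)\big(V_i^\pi(s')-\bar V^\pi(s')\big)$. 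Now I bound $\big|V_i^\pi(s')-\bar V^\pi(s')\big|\le\frac{1}{1-\gamma}$ (all value functions, and hence their average, lie in $[0,\tfrac{1}{1-\gamma}]$ under bounded rewards), and $\sum_{s'}\sum_i\big|\PM_i^\pi(s'|s)-\bar{\PM}^\pi(s'|s)\big|\le\kappa_1$ by definition of $\kappa_1$, so $\big\|\bar V^\pi-T_I^\pi\bar V^\pi\big\|_\infty\le\frac{\gamma\kappa_1}{n(1-\gamma)}\le\frac{\gamma\kappa_1}{1-\gamma}$. Plugging this into the contraction bound yields $\big\|\bar V^\pi-V_I^\pi\big\|_\infty\le\frac{\gamma\kappa_1}{(1-\gamma)^2}$, which is the claim (and by Lemma~\ref{lem:v:lowerbound} the absolute value is redundant, since $\bar V^\pi-V_I^\pi\ge0$).

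The main obstacle is the middle step: $\bar V^\pi$ is not the fixed point of $T_I^\pi$, and a crude bound on $\PM_i^\pi V_i^\pi-\bar{\PM}^\pi\bar V^\pi$ would scale with the raw diameter of the transition family (roughly $\frac{\gamma}{1-\gamma}$ per environment) and never surface the $\kappa_1$ dependence. The mean-zero cancellation is exactly what pairs each transition deviation $\PM_i^\pi-\bar{\PM}^\pi$ with the matching value deviation $V_i^\pi-\bar V^\pi$, so that the residual is governed by $\kappa_1$ times the value range. Everything else — the contraction property of $T_I^\pi$, the elementary $\frac{1}{1-\gamma}$ value bound, and the triangle inequality — is routine.
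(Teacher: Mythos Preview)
Your argument is correct, but it differs from the paper's route. The paper bounds each individual discrepancy $\|V_I^\pi-V_k^\pi\|_\infty$ separately: writing the Bellman equations for $V_I^\pi$ and $V_k^\pi$, subtracting, and splitting $\bar\PM^\pi V_I^\pi-\PM_k^\pi V_k^\pi=(\bar\PM^\pi-\PM_k^\pi)V_I^\pi+\PM_k^\pi(V_I^\pi-V_k^\pi)$ yields the self-referencing inequality $\|V_I^\pi-V_k^\pi\|_\infty\le\frac{\gamma\kappa_1}{1-\gamma}+\gamma\|V_I^\pi-V_k^\pi\|_\infty$, hence $\|V_I^\pi-V_k^\pi\|_\infty\le\frac{\gamma\kappa_1}{(1-\gamma)^2}$ for every $k$; averaging over $k$ and a triangle inequality finish. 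Your approach instead treats $\bar V^\pi$ directly as an approximate fixed point of $T_I^\pi$ and uses the covariance-style identity $\frac{1}{n}\sum_i(\PM_i^\pi V_i^\pi-\bar\PM^\pi\bar V^\pi)=\frac{1}{n}\sum_i(\PM_i^\pi-\bar\PM^\pi)(V_i^\pi-\bar V^\pi)$, which is not present in the paper. This centering trick buys you an extra factor of $1/n$ in the residual (you obtain $\|\bar V^\pi-T_I^\pi\bar V^\pi\|_\infty\le\frac{\gamma\kappa_1}{n(1-\gamma)}$ before deliberately discarding the $n$), so your argument actually proves the sharper bound $\|\bar V^\pi-V_I^\pi\|_\infty\le\frac{\gamma\kappa_1}{n(1-\gamma)^2}$; the paper's per-environment detour cannot see this improvement because it controls each $V_k^\pi$ against $V_I^\pi$ in isolation and only averages at the end.
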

After identifying $V^\pi_I$ as a lower bound of $\bar{V}^\pi$, it is natural to consider the optimal policy $\pi^*_I$ in the imaginary environment $\MM_I$.
Because of its optimality in $\MM_I$, the value function $V_I^{\pi^*_I}$ dominates the value function of any other policy $\pi$, \ie, $~V_I^{\pi^*_I}(s)\geq V_I^{\pi}(s),\forall s$.
In other words, $\pi^*_I$ reaches the largest lower bound of the averaged value function $\bar{V}^\pi$.

\begin{figure*}[t]
    \centering
    \includegraphics[width=0.98\linewidth]{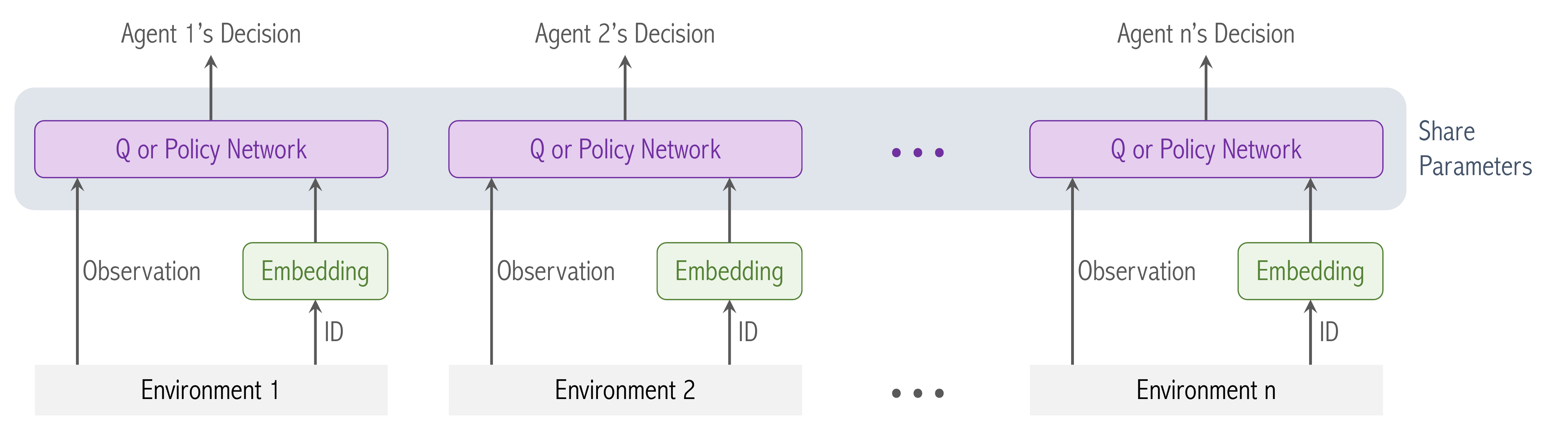}
    \caption{The figure shows the Q or policy networks of personalized FedRL. The networks, except the embedding layers, share parameters. 
    To generalize the model to new environments, we keep the trained Q or policy networks but train the embedding layer from random initialization.}
    \label{fig:peravg}
\end{figure*}

Finally, we are ready to show the convergence results of \texttt{QAvg}.
Taking the number of local updates as $E$, the algorithm with $E\geq 1$ not only converges, but also reaches the Q function of $\pi^*_I$ in $\MM_I$:
\begin{thm}[Convergence results of \texttt{QAvg}]
\label{qavg-converge}
    Take $\bar{Q}_t$ as the average of distributed Q functions $Q_t^k$ in the $n$ environments at iteration $t$, \ie, $\bar{Q}_t=\frac{1}{n}\sum_{k=1}^n Q_t^k$.
    Let the number of local updates be $E$.
    Assume $Q^{\pi_I^*}_I$ is the Q function of optimal policy $\pi^*_I$ in $\MM_I$.
    Letting $\eta_t=\frac{2}{(1-\gamma)(t+E)}$,  we have 
    \begin{align*}
        \Big\| \bar{Q}_t-Q^{\pi_I^*}_I \Big\|_\infty
        \; \le \;
        \frac{16\gamma E}{(1-\gamma)^3(t+E)}.
    \end{align*}
\end{thm}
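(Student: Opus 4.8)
The plan is to recognize that the \texttt{QAvg} local update is a damped application of the expected Bellman optimality operator of environment $\MM_k$: defining $T_kQ(s,a)\triangleq R(s,a)+\gamma\sum_{s'}\PM_k(s'|s,a)\max_{a'}Q(s',a')$, the local step reads $Q^k_{t+1}=(1-\eta_t)Q^k_t+\eta_t T_kQ^k_t$ between aggregations, and each $T_k$ is a $\gamma$-contraction in $\|\cdot\|_\infty$. The single observation that makes everything go through is that, on a \emph{common} input $Q$, averaging these operators recovers the imaginary environment's optimality operator, $\frac1n\sum_{k=1}^n T_kQ=\bar{T}Q$ with $\bar{T}Q(s,a)=R(s,a)+\gamma\sum_{s'}\bar{\PM}(s'|s,a)\max_{a'}Q(s',a')$, since neither $R$ nor the inner $\max$ depends on $k$. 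Hence $\bar{T}$ is a $\gamma$-contraction whose unique fixed point is exactly $Q^{\pi_I^*}_I$, and the theorem reduces to showing that the \texttt{QAvg} iterates track this fixed point at rate $O(1/(t+E))$. As a preliminary I would record, by a one-line induction using the reward normalization $R(\cdot,\cdot)\in[0,1]$ implicit in the stated bound, that $0\le Q^k_t\le\tfrac1{1-\gamma}$ for all $t,k$ (the damped update is a convex combination of two quantities in this range, and aggregation only averages), and consequently $\|T_kQ^k_t-Q^k_t\|_\infty\le\tfrac1{1-\gamma}$.

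Next I would track the virtual averaged iterate $\bar{Q}_t\triangleq\frac1n\sum_{k=1}^n Q^k_t$. A global aggregation replaces each $Q^k_t$ by $\bar{Q}_t$ and hence leaves the average unchanged, so $\bar{Q}_t$ obeys, at \emph{every} step, $\bar{Q}_{t+1}=(1-\eta_t)\bar{Q}_t+\eta_t\,\frac1n\sum_k T_kQ^k_t$. Using $Q^{\pi_I^*}_I=\frac1n\sum_k T_kQ^{\pi_I^*}_I$, the $\gamma$-contraction of each $T_k$, and $\|Q^k_t-Q^{\pi_I^*}_I\|_\infty\le\|Q^k_t-\bar{Q}_t\|_\infty+\|\bar{Q}_t-Q^{\pi_I^*}_I\|_\infty$, I obtain the scalar recursion $a_{t+1}\le\big(1-(1-\gamma)\eta_t\big)a_t+\gamma\eta_t\,\Delta_t$, where $a_t\triangleq\|\bar{Q}_t-Q^{\pi_I^*}_I\|_\infty$ and $\Delta_t\triangleq\frac1n\sum_k\|Q^k_t-\bar{Q}_t\|_\infty$ is the ``client drift.''

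The core of the proof is bounding $\Delta_t$. Immediately after an aggregation all local tables agree, so $\Delta=0$; within the ensuing round there are at most $E$ local steps, each of which moves any $Q^k$ by at most $\eta_s\|T_kQ^k_s-Q^k_s\|_\infty\le\tfrac{\eta_s}{1-\gamma}$, and the same bound controls the displacement of $\bar{Q}$. Summing over the round gives $\Delta_t\le\tfrac{2}{1-\gamma}\sum_s\eta_s$ over the $\le E$ local steps of the current round; the offset $+E$ in $\eta_t=\tfrac2{(1-\gamma)(t+E)}$ is precisely what makes $\eta_s$ vary by less than a factor $2$ within a round, so $\sum_s\eta_s=O(E\eta_t)$ and $\Delta_t=O\!\big(\tfrac{E}{(1-\gamma)^2(t+E)}\big)$ (the very first rounds are covered trivially by $a_t\le\tfrac1{1-\gamma}$). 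Crucially, no heterogeneity quantity $\kappa_1$ enters here: the drift is controlled purely by boundedness of the $Q$-tables and the number $E$ of local steps, which is exactly why the final rate is $\kappa_1$-free. Substituting into the recursion with $\eta_t=\tfrac2{(1-\gamma)(t+E)}$ yields $a_{t+1}\le\tfrac{t+E-2}{t+E}a_t+\tfrac{16\gamma E}{(1-\gamma)^3(t+E)^2}$.

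Finally I would solve this recursion by induction on $t$: with $B\triangleq\tfrac{16\gamma E}{(1-\gamma)^3}$, the hypothesis $a_t\le\tfrac{B}{t+E}$ gives $a_{t+1}\le\tfrac{(t+E-2)B+B}{(t+E)^2}=\tfrac{(t+E-1)B}{(t+E)^2}\le\tfrac{B}{t+E+1}$, using $(t+E-1)(t+E+1)\le(t+E)^2$; the base case (small $t$) is routine since $a_t\le\tfrac1{1-\gamma}$ always. I expect two places to demand care. First, the drift estimate: the within-round comparison of step sizes and the accumulation over $E$ steps must be done cleanly, and one must avoid routing the drift through the heterogeneity bounds (which would spuriously introduce $\kappa_1$). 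Second, the induction is critically tuned — the contraction factor $1-(1-\gamma)\eta_t=1-\tfrac2{t+E}$ exactly matches the $1/(t+E)$ ansatz, so there is essentially no slack, the $O(1/(t+E)^2)$ additive term must be absorbed precisely, and recovering the stated constant $16$ relies on the normalization $R\in[0,1]$ (so that $\|T_kQ-Q\|_\infty\le\tfrac1{1-\gamma}$ rather than $\tfrac2{1-\gamma}$).
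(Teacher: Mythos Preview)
Your proposal is correct and mirrors the paper's proof almost exactly: the paper also reduces to the one-step recursion $\|\bar{Q}_{t+1}-Q^*\|_\infty\le(1-(1-\gamma)\eta_t)\|\bar{Q}_t-Q^*\|_\infty+\gamma\eta_t\cdot\frac{1}{n}\sum_k\|Q_t^k-\bar{Q}_t\|_\infty$ via $Q^*=\frac{1}{n}\sum_k T_kQ^*$ and the contraction of each $T_k$, bounds the drift by $\frac{4\eta_t(E-1)}{1-\gamma}$ using $Q_t^k\in[0,\tfrac{1}{1-\gamma}]$ and the within-round step-size comparison $\eta_{t'}\le 2\eta_t$, and closes with the same induction $a_t\le\frac{\zeta}{t+E}$ with $\zeta=\frac{16\gamma(E-1)}{(1-\gamma)^3}$. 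The only cosmetic difference is that the paper gets the factor $2$ in the drift bound via $\frac{1}{n}\sum_k\|Q_t^k-\bar{Q}_t\|_\infty\le\frac{2}{n}\sum_k\|Q_t^k-\bar{Q}_{t_0}\|_\infty$ rather than your triangle-inequality split through $\bar{Q}_{t_0}$, but the resulting bound and the rest of the argument are identical.
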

\begin{remark}
$E=1$ makes a special variant of \texttt{QAvg}.
This means the agents communicate after every local update of their Q functions.
Although the heavy communication load makes \texttt{QAvg} with $E=1$ quite impractical, it provides intuitions on how \texttt{QAvg} achieves the optimal Q function of $\pi_I^*$.
The update of every local Q function in \texttt{QAvg} with $E=1$ is formulated as follows:

\begin{small}
\begin{align*}
    Q_{t+1}^j(s,a)&\leftarrow \frac{1}{n}\sum_{k=1}^n\Big[ R(s,a)+\gamma\sum_{s'}\PM_k(s'|s,a)\max_{a'}Q_t^k(s',a')\Big]\\
    & \quad =R(s,a)+\gamma \sum_{s'}\bar{\PM}(s'|s,a)\max_{a'}Q_t^j(s',a'),
\end{align*}
\end{small}

\noindent where the last equality is because the local Q functions keep the same in \texttt{QAvg} with $E=1$.
In this way, every local Q function is updated as if the agent were trained in the imaginary environment $\MM_I$.
\end{remark}
\begin{remark}
\texttt{QAvg} with $E=\infty$ corresponds to the algorithm which never communicates and simply averages those independently trained Q functions as the aggregated Q function.
Neither its theoretical convergence nor its empirical performance is similar to that of \texttt{QAvg} with $E<\infty$.
\end{remark}

\subsection{Theoretical Analysis of PAvg}
\texttt{PAvg} directly views the policy $\pi$ as optimization parameters in maximizing the objective function.
In this way, the corresponding theoretical analysis focuses on the convergence of objective values $g_{d_0}(\bar{\pi}_t)$, where $\bar{\pi}_t$ represents the averaged policy shown in aggregation of \texttt{PAvg} at time $t$.
\begin{thm}[Convergence performance of \texttt{PAvg}]
\label{pavg-converge}
Denote $L$ as the L-smoothness parameter of $g_{d_0}(\pi)$ w.r.t.\ $\pi$, 
 $E$ as the number of local updates,
and $\kappa_2$ as the environment heterogeneity.
Letting $\eta_t = \sqrt{\frac{E}{12L^2(t+E/3)}}$,  we have that
\begin{align*}
     \max_{t=0,..,T-1} g_{d_0}(\bar{\pi}_t)
     \; \geq \; g_{d_0}(\pi_{d_0}^\star) - c \cdot \Big(\kappa_2+\frac{1}{\sqrt{T}} \Big),
\end{align*}
where $c$ is constants and logarithmic factors of T.
\end{thm}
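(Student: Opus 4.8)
The plan is to treat \texttt{PAvg} as inexact projected gradient ascent for $g_{d_0}=\frac1n\sum_{i=1}^n g_{d_0,i}$ over the product of simplices $\mathcal C:=\Delta(\AM)^{|\SM|}$, and to run the standard analysis of federated first-order methods, feeding in two facts about the direct policy parametrization that hold per environment and hence for $g_{d_0}$: (a) $g_{d_0}$ is $L$-smooth on $\mathcal C$ with gradients bounded by some $G$, where $L,G$ are polynomial in $\frac1{1-\gamma},|\AM|,|\SM|$; and (b) $g_{d_0}$ obeys a gradient-domination inequality $g_{d_0}(\pi^\star_{d_0})-g_{d_0}(\pi)\le C_{\mathrm{gd}}\max_{\pi'\in\mathcal C}\langle\nabla_\pi g_{d_0}(\pi),\pi'-\pi\rangle$, with $C_{\mathrm{gd}}$ controlled by $\frac1{1-\gamma}$ and the distribution-mismatch ratio $\|d^{\pi^\star_{d_0}}_{d_0}/d_0\|_\infty$. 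I would record (a)--(b) as lemmas (they follow from the policy-gradient theorem and known smoothness/domination results for direct parametrization, cf.\ \cite{agarwal2019theory}) and never open the value functions again.

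Introduce the virtual iterate $\bar\pi_t=\frac1n\sum_k\pi_t^k$, which is unchanged by aggregations (an aggregation replaces each $\pi_t^k$ by their mean) and evolves at every step by $\bar\pi_{t+1}=\frac1n\sum_k\proj_{\mathcal C}(\pi_t^k+\eta_t\nabla g_{d_0,k}(\pi_t^k))$. Work per round of length $E$: write $\bar\pi_{(m)}:=\bar\pi_{mE}$ and let $\Delta_{(m)}:=\proj_{\mathcal C}(\bar\pi_{(m)}+\eta_{mE}\nabla g_{d_0}(\bar\pi_{(m)}))-\bar\pi_{(m)}$ be the ``ideal'' centralized step at $\bar\pi_{(m)}$. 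The variational inequality of the projection gives $\langle\nabla g_{d_0}(\bar\pi_{(m)}),\Delta_{(m)}\rangle\ge \eta_{mE}^{-1}\|\Delta_{(m)}\|^2$, while nonexpansiveness of $\proj_{\mathcal C}$ and $L$-smoothness control the deviation of the realized round-move from the ideal one in terms of the client drift $\overline D_t:=\frac1n\sum_k\|\pi_t^k-\bar\pi_t\|$. Plugging this into the $L$-smoothness ascent lemma and using Young's inequality on the linear-in-drift cross term yields, for $\eta_t\le\frac1{2L}$, a per-round progress bound of the form
\[
g_{d_0}(\bar\pi_{(m+1)})\ \ge\ g_{d_0}(\bar\pi_{(m)})+c_1\,\eta_{mE}E\,\big\|\Delta_{(m)}/\eta_{mE}\big\|^2-c_2\,L^2\eta_{mE}^3E^3\kappa_2^2-c_3\,L\eta_{mE}^2E^2\kappa_2^2 .
\]

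The remaining ingredient is the drift bound. Unrolling the local updates from the aggregation time $t_0=mE$ (where $\pi_{t_0}^k=\bar\pi_{t_0}$ for all $k$), using nonexpansiveness of $\proj_{\mathcal C}$ and the elementary but crucial fact that the \emph{average} over agents of the gradient discrepancies $\nabla g_{d_0,k}(\bar\pi_{t_0})-\nabla g_{d_0}(\bar\pi_{t_0})$ vanishes identically, a discrete Gr\"onwall argument gives $\overline D_t\lesssim E\eta_{t_0}\kappa_2$ as soon as $\sum_{s=t_0}^{t_0+E-1}\eta_s L$ is below an absolute constant; for the finitely many early rounds where this fails, the crude estimate $\overline D_t\le 2GE\eta_{t_0}$ (bounded gradients, bounded $\mathrm{diam}(\mathcal C)$) contributes only a horizon-independent additive constant. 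Substituting the drift bound, telescoping the per-round inequality, using $g_{d_0}(\bar\pi_{(T/E)})\le g_{d_0}(\pi^\star_{d_0})$, and inserting the prescribed $\eta_t=\sqrt{E/(12L^2(t+E/3))}$ (so $\eta_t\le\frac1{2L}$ throughout, $\sum_m\eta_{mE}E=\Theta(\sqrt{ET}/L)$, and $\sum_m(\eta_{mE}E)^2$ grows logarithmically while $\sum_m(\eta_{mE}E)^3$ converges) bounds the $\eta$-weighted average, hence the minimum over $m$, of $\|\Delta_{(m)}/\eta_{mE}\|^2$ by the claimed order $\kappa_2+1/\sqrt T$ up to constants and a $\log T$ factor. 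Finally the elementary projection estimate $\max_{\pi'\in\mathcal C}\langle\nabla g_{d_0}(\bar\pi_{(m)}),\pi'-\bar\pi_{(m)}\rangle\le c_4\,\|\Delta_{(m)}/\eta_{mE}\|$ together with gradient domination (b) converts this into $\min_m[\,g_{d_0}(\pi^\star_{d_0})-g_{d_0}(\bar\pi_{(m)})\,]\le c\,(\kappa_2+1/\sqrt T)$, which gives the claim since $\max_t g_{d_0}(\bar\pi_t)$ is at least the value at the best aggregated iterate.

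I expect the main obstacle to be the two-way coupling between the drift bound and the step size: the Gr\"onwall estimate $\overline D_t\lesssim E\eta_t\kappa_2$ only becomes valid once the step size has decayed enough that a round's worth of updates is ``short'', so one has to split the horizon, absorb the early high-drift phase into an $O(1)$ term, and then check that the schedule $\eta_t\propto (t+E/3)^{-1/2}$ makes the heterogeneity contribution and the optimization error balance at the stated rate; it is also this balancing that forces the cancellation ``average of the gradient discrepancies $=0$'' to be exploited (so that, after Young's inequality, $\kappa_2$ enters the per-round error only quadratically) rather than the crude $\kappa_2$ bound everywhere. A secondary nuisance is norm bookkeeping: the inequalities relating $\max_{\pi'\in\mathcal C}\langle\nabla g_{d_0}(\cdot),\pi'-\cdot\rangle$, the gradient mapping, and $\|\nabla g_{d_0}\|$ on $\Delta(\AM)^{|\SM|}$ carry $\sqrt{|\SM||\AM|}$ and $\frac1{1-\gamma}$ factors, all absorbed into $c$.
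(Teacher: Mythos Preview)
Your overall strategy—an ascent lemma for the virtual average $\bar\pi_t$, a drift bound for $\overline D_t$, and a gradient-domination step converting gradient-mapping smallness into suboptimality—matches the paper's architecture, and your final conversion is essentially the paper's ``Global Optimality'' theorem. But your per-round progress inequality has a genuine gap.

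You assert that the deviation of the realized round-move from the ideal centralized step is controlled by the client drift alone, and that after feeding in $\overline D_t\lesssim E\eta_t\kappa_2$ and applying Young, $\kappa_2$ enters the per-round error only quadratically. This fails for \emph{projected} updates. At every step the ``centering error'' $\tfrac1n\sum_k G_k^{\eta_t}(\bar\pi_t)-G^{\eta_t}(\bar\pi_t)$ is in general nonzero: only the \emph{gradients} average to $\nabla g_{d_0}$; the projection is nonlinear, so the gradient \emph{mappings} do not, and nonexpansiveness only bounds this term by $O(\kappa_2)$. It is not part of the drift and carries no extra factor of $\eta_t$, so after the cross-term $\langle\nabla g_{d_0}(\bar\pi_{(m)}),\cdot\rangle$ it contributes $O(\eta_t\kappa_2)$ per step, which survives telescoping and yields precisely the non-vanishing $\kappa_2$ bias in the theorem. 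Indeed, your displayed per-round inequality (only $\kappa_2^2$ error terms) would, if true, give $\min_m\|\Delta_{(m)}/\eta_{mE}\|^2=\widetilde O(1/\sqrt T)$ with \emph{no} $\kappa_2$ bias—strictly stronger than the theorem—which is a sign a term has been dropped. Relatedly, your Young step would have to trade the cross-term against $\|\nabla g_{d_0}(\bar\pi_{(m)})\|^2$, not the gradient-mapping norm $\|\Delta_{(m)}/\eta_{mE}\|^2$; on the simplex boundary these can differ arbitrarily, so this does not close either.

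For contrast, the paper works step-by-step rather than per-round, accepts the $\kappa_2$-linear term immediately via
\[
\Big\|\tfrac1n\sum_kG_k^{\eta_t}(\pi_t^k)-G^{\eta_t}(\bar\pi_t)\Big\|\le\kappa_2+\tfrac{L}{n}\sum_k\|\pi_t^k-\bar\pi_t\|,
\]
bounds $\|\nabla g_{d_0}\|\le\sqrt{|\AM|}/(1-\gamma)^2$ in the cross-term, and uses the crude $\kappa_2$-\emph{independent} drift estimate $\tfrac1n\sum_k\|\pi_t^k-\bar\pi_t\|\le 2\eta_t(E{-}1)\sqrt{|\AM|}/(1-\gamma)^2$ from $\|G_k^\eta\|\le\|\nabla g_{d_0,k}\|$, with no Gr\"onwall argument. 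Your refined drift bound $\overline D_t\lesssim E\eta_t\kappa_2$ is correct and sharper than the paper's, but it cannot remove the projection-induced $\kappa_2$ bias in the update direction, so it does not improve the final rate.
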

\begin{remark}
\label{pavgE}
Here we discuss the effect of local iterations $E$ on the convergence.
The term $c \cdot (\kappa_2+\frac{1}{\sqrt{T}} )$ in the theorem is equal to
\begin{equation*}
    C_1 + T^{-0.5} \cdot \big(C_2 E^{-0.5} + C_3 E^{0.5} + C_4 E^{2.5} \big) .
\end{equation*}
Here, $C_1, C_2, C_3, C_4$ are either independent of $T$ and $E$ or contain logarithmic factors of $T$ and $E$,
implying  there exists an $E$ that is the best for the convergence.
\end{remark}

\begin{table*}[t]
    \centering
    \begin{tabular}{c|ccc|ccc}
    \multirow{2}{*}{} & \multicolumn{3}{c|}{RandomMDPs} & \multicolumn{3}{c}{WindyCliffs}\\ \cline{2-7}
         & QAvg  & SoftPAvg  & ProjPAvg & QAvg  & SoftPAvg  & ProjPAvg \\ \hline
        $\kappa=0$ & 35.42\begin{tiny}$\pm 0.05$\end{tiny} & 35.15\begin{tiny}$\pm 0.05$\end{tiny} & 34.97\begin{tiny}$\pm 0.05$\end{tiny} & 133.97\begin{tiny}$\pm 0.00$\end{tiny} & 133.97\begin{tiny}$\pm 0.00$\end{tiny} & 119.97\begin{tiny}$\pm 0.34$\end{tiny}\\ \hline
        $\kappa=0.2$ & 35.23\begin{tiny}$\pm 0.05$\end{tiny} & 34.97\begin{tiny}$\pm 0.05$\end{tiny} & 34.92\begin{tiny}$\pm 0.05$\end{tiny} & 133.97\begin{tiny}$\pm 0.00$\end{tiny} & 133.97\begin{tiny}$\pm 0.00$\end{tiny} & 118.47\begin{tiny}$\pm 0.34$\end{tiny}\\ \hline
        $\kappa=0.4$ & 34.80\begin{tiny}$\pm 0.05$\end{tiny} & 34.58\begin{tiny}$\pm 0.05$\end{tiny} & 34.54\begin{tiny}$\pm 0.05$\end{tiny} & 133.97\begin{tiny}$\pm 0.00$\end{tiny} & 133.96\begin{tiny}$\pm 0.00$\end{tiny} & 115.82\begin{tiny}$\pm 0.34$\end{tiny}\\ \hline
        $\kappa=0.6$ & 34.14\begin{tiny}$\pm 0.06$\end{tiny} & 34.02\begin{tiny}$\pm 0.06$\end{tiny} & 34.02\begin{tiny}$\pm 0.06$\end{tiny} & 133.96\begin{tiny}$\pm 0.00$\end{tiny} & 133.95\begin{tiny}$\pm 0.00$\end{tiny} & 111.46\begin{tiny}$\pm 0.35$\end{tiny}\\ \hline
        $\kappa=0.8$ & 33.29\begin{tiny}$\pm 0.06$\end{tiny} & 33.25\begin{tiny}$\pm 0.06$\end{tiny} & 33.38\begin{tiny}$\pm 0.06$\end{tiny} & 133.65\begin{tiny}$\pm 0.03$\end{tiny} & 133.59\begin{tiny}$\pm 0.03$\end{tiny} & 103.53\begin{tiny}$\pm 0.36$\end{tiny}
    \end{tabular}
    \caption{Impact of environment heterogeneity on convergent performance: larger $\kappa$ indicates environments with larger environment heterogeneity, \ie, $\{\PM_k^\kappa\}_{k=1}^N$ with larger noise from $\PM_0$; \texttt{QAvg} ($E=4$), \texttt{SoftPAvg} ($E=4$) and \texttt{ProjPAvg} ($E=32$) are evaluated on the noiseless environment $\PM_0$; each setting is repeated with $16,000$ random seeds, and we display the mean with standard error.}
    \label{tab:env_heter}
\end{table*}

\begin{figure*}[t]
    \centering
    \begin{minipage}{.42\linewidth}
        \begin{tabular}{c|ccc}
        \multirow{2}{*}{} &  \multicolumn{3}{c}{WindyCliffs}\\ \cline{2-4}
             & QAvg  & SoftPAvg  & ProjPAvg \\ \hline
            E=1 & 129.55\begin{tiny}$\pm 0.17$\end{tiny} & 126.92\begin{tiny}$\pm 0.19$\end{tiny}& 122.08\begin{tiny}$\pm 0.30$\end{tiny}\\ \hline
            E=2 & 129.55\begin{tiny}$\pm 0.17$\end{tiny}& 129.56\begin{tiny}$\pm 0.17$\end{tiny}& 123.28\begin{tiny}$\pm 0.29$\end{tiny}\\ \hline
            E=4 & 129.55\begin{tiny}$\pm 0.17$\end{tiny}& \textbf{129.65}\begin{tiny}$\pm 0.17$\end{tiny}& 124.94\begin{tiny}$\pm 0.27$\end{tiny}\\ \hline
            E=8 & 129.55\begin{tiny}$\pm 0.17$\end{tiny}& 129.62\begin{tiny}$\pm 0.17$\end{tiny}& \textbf{126.03}\begin{tiny}$\pm 0.25$\end{tiny}\\ \hline
            E=16 & 129.55\begin{tiny}$\pm 0.17$\end{tiny}& 129.54\begin{tiny}$\pm 0.17$\end{tiny}& 125.64\begin{tiny}$\pm 0.24$\end{tiny}\\ \hline
            E=$\infty$ & 129.12\begin{tiny}$\pm 0.17$\end{tiny}& 127.01\begin{tiny}$\pm 0.18$\end{tiny}& 90.92\begin{tiny}$\pm 0.41$\end{tiny}
        \end{tabular}
    \end{minipage}
    \hspace{0.7in}
    \begin{minipage}{.35\linewidth}
        \includegraphics[width=\linewidth]{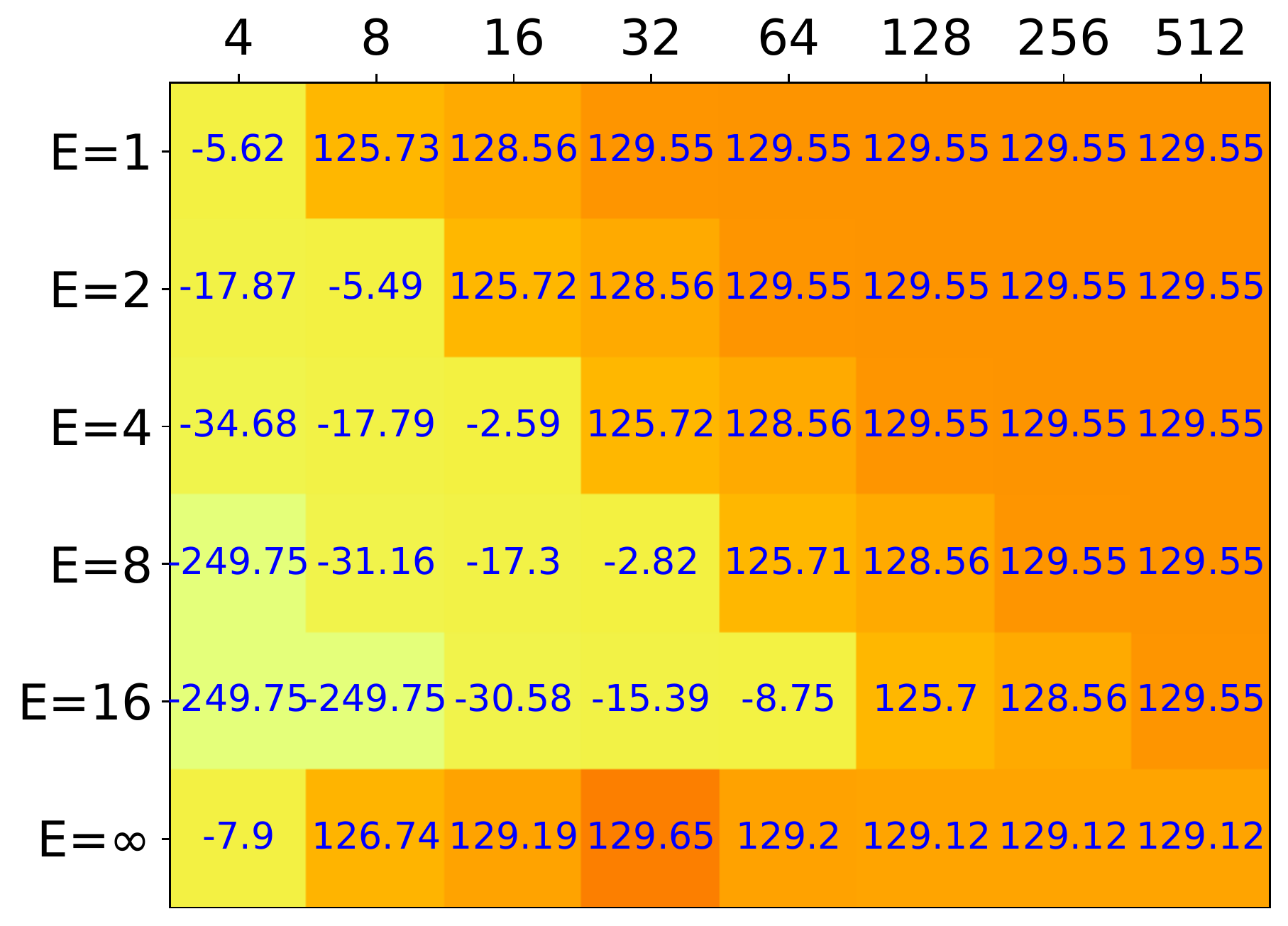}
    \end{minipage}
    \caption{Impact of local update time $E$ on convergent performance: larger $E$ indicates less frequent communication while $E=\infty$ means agents do not communicate; \textbf{Left} shows the objective values of FedRL at convergence. 
    \textbf{Right} shows the objective values of FedRL at different iterations during the training of \texttt{QAvg}s with different $E$.
    %\red{Shusen: What are the columns in the right figure?}}\blue{Hao: Columns indicate the iteration number (I have fixed the expression).
    }
    \label{tab&fig:windycliffs}
\end{figure*}

\section{Personalized FedRL}
\label{heuristics}

We propose a heuristic method that allows for better training in each local environment and better generalization to novel environments.
The idea is personalized FedRL, that is, instead of learning one policy for all the $n$ agents, we learn $n$ policies for the $n$ agents, respectively.
In this section, we consider deep FedRL; see Figure \ref{fig:peravg}.
We treat each environment as an ID and embed it into a low-dimension vector which is regarded as part of the state.
The $n$ agents share all the layers except the embedding layer.

After the training, the learned policy network may be applied to a never-seen-before environment.
In the new environment, the embedding layer cannot be reused. 
We need to let the agent interact with the new environment in order to learn the low-dimensional vector.
If the output of embedding is $d$-dimensional,  we need to learn only $d$ parameters.
Therefore, to generalize the trained policy network to a new environment, we need to perform few-shot learning in the new environment to learn the low-dimension embedding.

The benefit of the personalization heuristic is two-fold---better training and better generalization.
Without personalization, we seek to learn one policy that performs uniformly well in all the $n$ environments.
Since one policy cannot achieve the optimal performance in every environment, the learned policy is suboptimal in every environment.
With the $n$ private embedding layers, the convergent model serves as $n$ different policies for $n$ policies; each policy best fits one environment.
When the learned model is deployed to a never-seen-before environment, the few-shot learning of the embedding layer makes the policy quickly adapted to the new environment.
The small number of parameters to be tuned also adds robustness to the generalization process.

% \begin{figure}[t]
%     \centering
%     \includegraphics[width=0.9\linewidth]{fig/Tabular_train.pdf}
%     \caption{Training performance of \texttt{QAvg}s and \texttt{PAvg}s with different $E$ in tabular environments.}
%     \label{tabular_train}
% \end{figure}

\vspace{-0.1in}
\section{Empirical Study}

In this section, we firstly use tabular environments to verify our theories on \texttt{QAvg} and \texttt{PAvg}.
Then, we evaluate the extensions to deep reinforcement learning, \texttt{DQNAvg} and \texttt{DDPGAvg}, which are more practical in real-world applications.
Finally, we demonstrate that the personalization heuristic improves both training and generalization performance.
\footnote{Our code of both tabular cases and deep cases have been released on https://github.com/pengyang7881187/FedRL}

\subsection{Settings}
\label{exp_setting}
\textbf{Environments.} 
We construct a collection of heterogeneous environments by varying the state-transition parameters.
For example, given the \texttt{CartPole} environment, we vary the length of pole.
We use two types of tabular environments: first, random MDP with randomly generated state transition and reward function, and second, WindyCliff \cite{paul2019fingerprint} whose wind speed is uniformly sampled from $[S_{min},S_{max}]$.
We also use non-tabular environments in \texttt{Gym} \cite{1606.01540}: first, \texttt{CartPole} and \texttt{Acrobat} with varying length of pole, and second, \texttt{Hopper} and \texttt{Half-cheetah} with adjustable length of leg.

\textbf{Control.} 
For \texttt{QAvg}, after learning the averaged Q function $\bar{Q}(s,a)$, we use the deterministic policy, $\pi(s)=\argmax_{a\in\AM}\bar{Q}(s,a)$, for controlling the agent.
For \texttt{PAvg}, we directly learn a stochastic policy, $\pi(a|s)$, that outputs the probability of taking action $a$.
We use two types of \texttt{PAvg}: first, \texttt{ProjPAvg} denotes \texttt{PAvg} with projection operator, and second, \texttt{SoftPAvg} denotes \texttt{PAvg} with softmax activation function.

% \textbf{Convergent policy derivation.} For convergent models of these methods, we derive convergent policies in different ways.
% Specifically, for the Q-Learning based \texttt{QAvg}, we take the action corresponding to the maximal value of the convergent Q function at each state, \ie, $~\pi(s)=\texttt{argmax}_a \bar{Q}(s,a)$; for the policy gradient based \texttt{PAvg}, we take the policy corresponding to parameters of the convergent models.
% Moreover, in terms of \texttt{PAvg}, we evaluate two ways of implementation, softmax parameterization \cite{} (SoftPavg) and direct parameterization shown in theoretical analysis (ProjPAvg).

\textbf{Deep FedRL.} 
Deep Q Network (DQN) \cite{mnih2015human} and Deep Deterministic Policy Gradient (DDPG) \cite{lillicrap2015continuous} are two practical deep RL methods.
We extend our proposed \texttt{QAvg} and \texttt{PAvg} to DQN and DDPG; we call the extension \texttt{DQNAvg} and \texttt{DDPGAvg}.
Specifically, \texttt{DQNAvg} periodically approximately aggregates Q functions via averaging parameters of local Q networks, while \texttt{DDPGAvg} periodically aggregates both critic networks and policy networks stored in local devices.

\textbf{Baseline.}
The point of FedRL is to use all the agents' experience without directly sharing their experience.
As opposed to FedRL, independent RL lets each agent perform RL without exchanging information with other agents.
We use independent RL as the baseline for showing the usefulness of collaboration.
Let \texttt{Baseline} be the step-wise averaged objective values of the $n$ local models.
In other words, \texttt{Baseline} represents the performance of a randomly selected local model in $n$ involved environments.

\begin{figure}[t]
    \centering
    \includegraphics[width=0.9\linewidth]{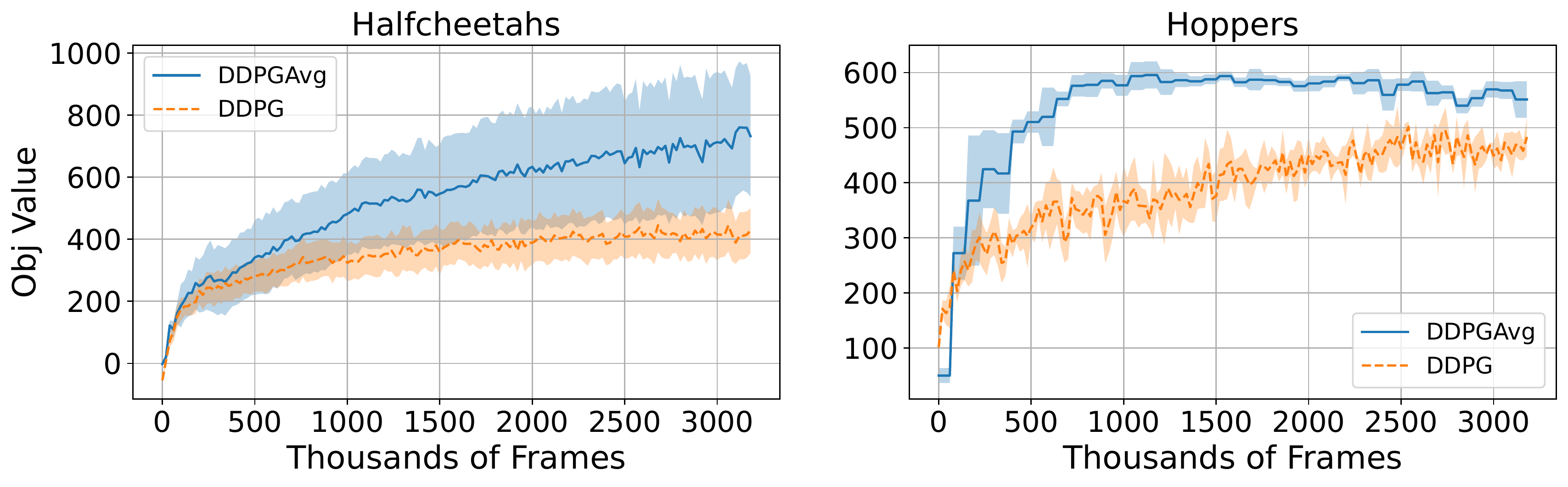}
    \caption{Acceleration of local training in federated setting: averaged local performance of locally trained policies is compared with averaged local performance of the policy trained in federated setting; we depict the mean as line and $1.65$ times of standard error as shadow.}
    \label{fed-speed}
\end{figure}
\begin{figure}[t]
    \centering
    \includegraphics[width=0.9\linewidth]{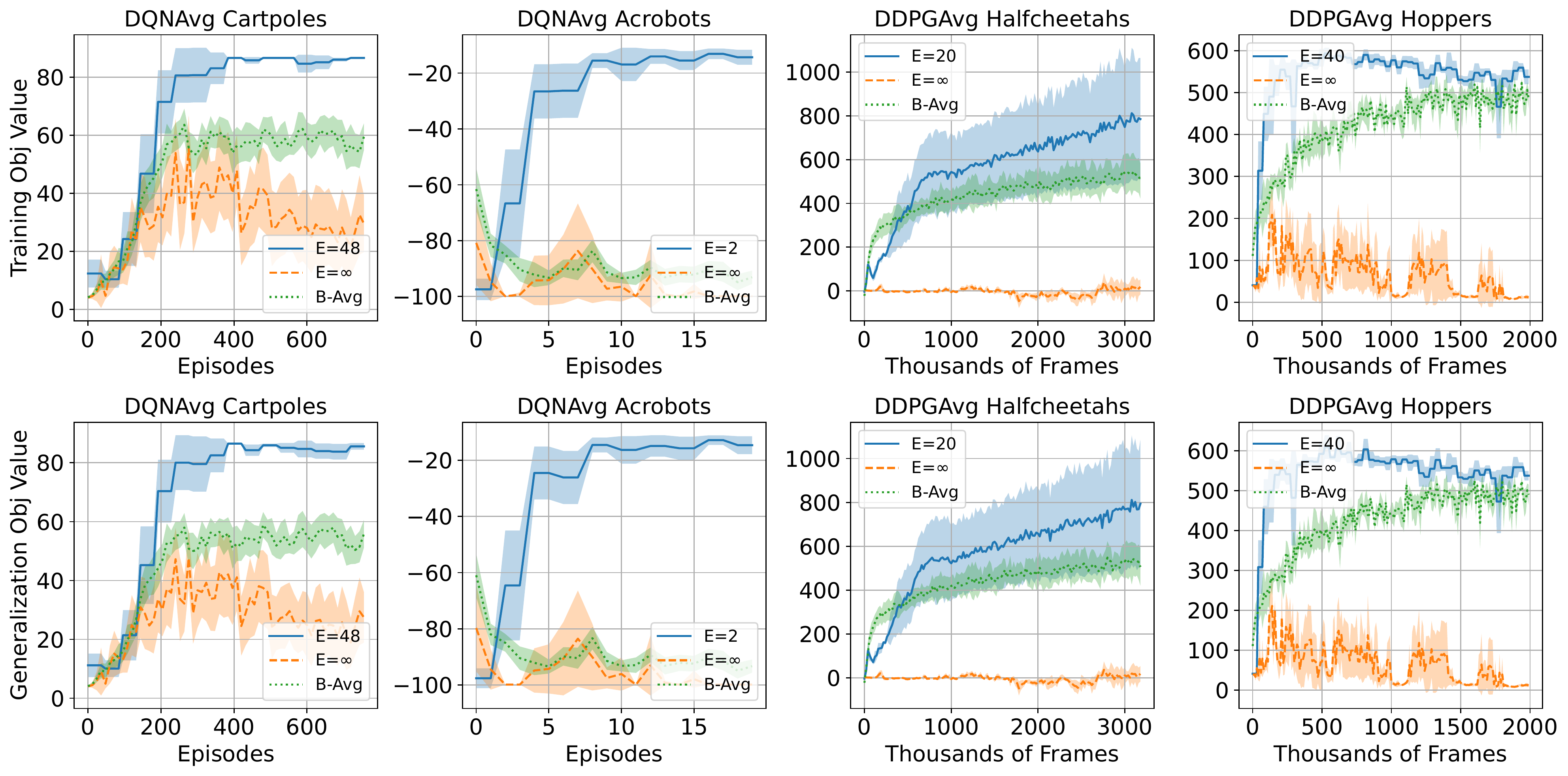}
    \caption{Training and generalization performance of DQNAvg and DDPGAvg in different tasks of FedRL: training performance refers to the objective value of FedRL, \ie, averaged performance in $N$ environments; generalization performance refers to the averaged performance in $M$ environments with newly generated state-transitions; we depict the mean as line and $1.65$ times of standard error as shadow.}
    \label{deep-exper}
\end{figure}
\vspace{-0.1in}

\subsection{Effect of Environment Heterogeneity}

To check the impact of environment heterogeneity on convergent performance of our methods, we construct tasks of FedRL with various $\kappa$, which controls how different the state transitions are.
Theorems \ref{qavg-converge} and \ref{pavg-converge} claim that larger environment heterogeneity, \ie, $\kappa$ with larger values, leads to larger performance gap with the optimal policy.
Empirical results shown in Table \ref{tab:env_heter} match such theoretical observations, and we discuss the experimental settings as below:

To get the control of environment heterogeneity with a scalar $\kappa$, we sample $N+1$ different state transitions $\{\PM_k\}_{k=0}^{N}$ and then construct the environments with $\{\PM_k^\kappa=\kappa \PM_k+(1-\kappa)\PM_0\}_{k=1}^N$.
$\{\PM_k^\kappa\}_{k=1}^N$ are $N$ copies of $\PM_0$ with noise, whose direction and intensity are respectively controlled with $\{\PM_k\}_{k=1}^N$ and $\kappa$.
With fixed $\{\PM_k\}_{k=0}^\kappa$, we manage to construct environments with environmental heterogeneity controlled by $\kappa$.
However, since the optimal policy for FedRL with $\{\PM_k^\kappa\}_{k=1}^N$ is computationally intractable, we make the following approximations: the convergent performance is approximated as the performance in noiseless central environment with $\PM_0$; the optimal policy is approximated as the optimal policy in the noiseless central environment $\PM_0$, \ie, the convergent policy with $\kappa=0$.
Each setting is repeated with $16,000$ random seeds.

\subsection{Effect of Communication Frequency}

We are next to show the impact of communication frequency on the convergence of our methods.
Specifically, the communication frequency is quantified by the number $E$ of local updates between two consecutive communications.
Empirical results in Figure \ref{tab&fig:windycliffs} reveal that communication frequency indeed influences \texttt{QAvg} and \texttt{PAvg}, yet quite in different ways.

For \texttt{QAvg}, Theorem \ref{qavg-converge} reveals that the convergent Q table is free of $E$ while communication frequency affects the convergence speed.
Figure \ref{tab&fig:windycliffs} \textbf{(Left)} confirms such a theoretical result with identical convergent values of \texttt{QAvg} with $E<\infty$, while Figure \ref{tab&fig:windycliffs} \textbf{(Right)} shows that \texttt{QAvg} with larger $E$ suffers from a lower convergence speed.
For \texttt{PAvg}, Theorem \ref{pavg-converge} reveals that the performance gap to the optimal policy is affected by $E$ and Remark \ref{pavgE} indicates the existence of optimal $E$.
Figure \ref{tab&fig:windycliffs} \textbf{(Left)} confirms that \texttt{PAvg}s, \texttt{ProjPAvg} and \texttt{SoftPAvg} have different convergent performance with different selection of $E$.
Moreover, performance peaks at $E=4,8$ respectively for \texttt{SoftPAvg} and \texttt{ProjPAvg} indicate that $E$ is a critical hyper-parameter in achieving the best convergent performance for \texttt{PAvg}s.

\begin{figure}[t]
    \centering
    \includegraphics[width=0.9\linewidth]{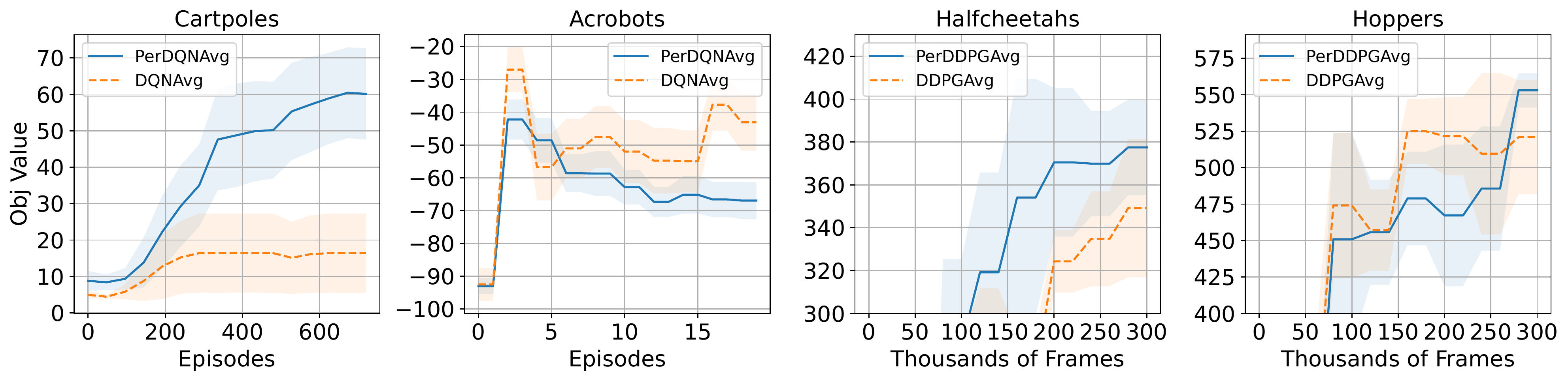}
    \caption{Improvement in local training with personalization heuristic: averaged local performance of \texttt{DQNAvg} and \texttt{DDPGAvg} is compared with the averaged local performance of \texttt{PerDQNAvg} and \texttt{PerDDPGAvg} with environment embeddings; we depict the mean as line and $1.65$ times of standard error as shadow.}
    \label{Person-local}
\end{figure}
\begin{figure}[t]
    \centering
    \includegraphics[width=0.9\linewidth]{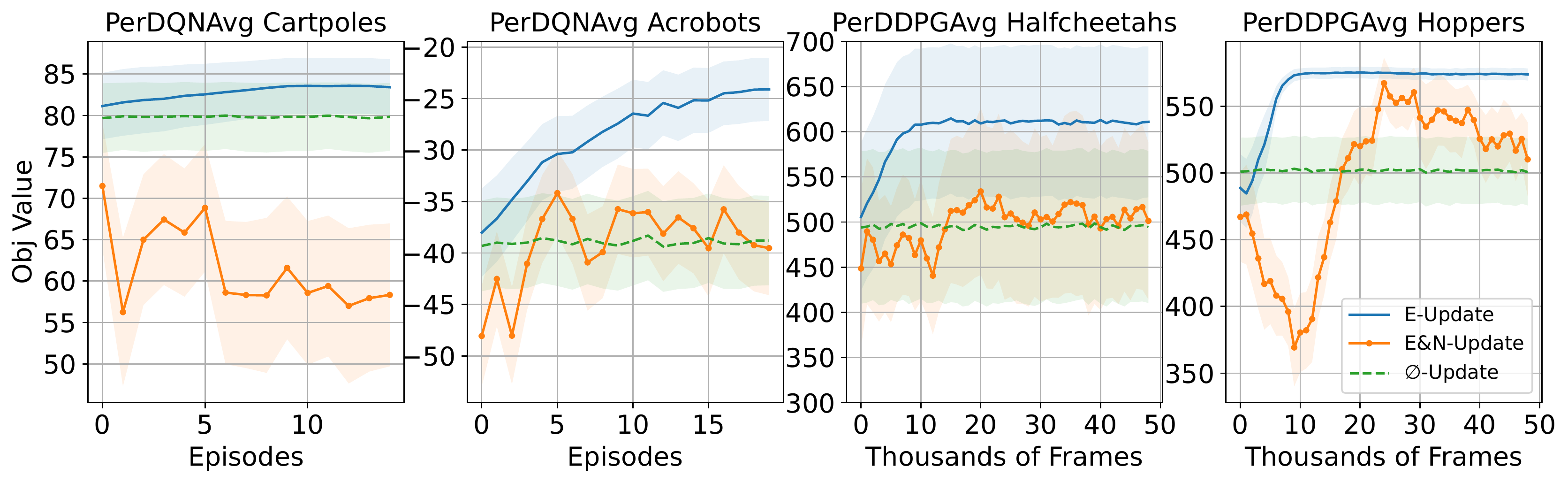}
    \caption{Impact of personalization heuristic on generalization performance: we compare different adjustment methods when fitting a novel environment given the learned convergent model; \texttt{E-Update} indicates only environment embeddings are adjusted, \texttt{E\&N-Update} indicates both embeddings and policy network are adjusted, and \texttt{$\Phi$-Update} keeps the learned model unchanged; we depict the mean as line and $1.65$ times of standard error as shadow.}
    \label{Person-general}
\end{figure}

\subsection{Experiments on Deep RL}
\label{pqavg}

Here we consider deep FedRL algorithms, \texttt{DQNAvg} and \texttt{DDPGAvg}, in more complicated FedRL tasks: \texttt{CartPoles} and \texttt{Acrobats} with discrete actions, \texttt{Halfcheetahs} and \texttt{Hoppers} with continuous actions.
In these scenarios, it is impossible to directly quantify environment heterogeneity $\kappa$ and we implicitly model it through sampling certain deciding parameters of state transitions from certain distribution. 
We first justify that the federated setting helps accelerate training in any individual environment, and then compare our methods with \texttt{Baseline} in terms of both training and generalization performance.

To figure out the impact of the federated setting on the training of any individual environment, we compare the averaged performance of local models in corresponding environments with and without communication with others.
Although collected experience is not allowed to share, communication of policy is believed to transfer certain knowledge from others.
As shown in Figure \ref{fed-speed}, policy communication indeed accelerates the local training and therefore alleviates the trouble of obtaining an efficient policy when data stored locally is limited.

Then we compare \texttt{DQNAvg} and \texttt{DDPGAvg} with their variants which do no communicate ($E=\infty$), and corresponding \texttt{Baseline}s, \ie, averaged performance of independently trained policies.
In terms of training performance, \texttt{DQNAvg} and \texttt{DDPGAvg} manage to obtain higher objective values of FedRL, which indicates the convergent policy uniformly performs well on all involved environments in FedRL.
Moreover, when faced with $M$ environments with newly generated state transitions, the learned policies of \texttt{DQNAvg} and \texttt{DDPGAvg} also outperform their variants with $E=\infty$ and \texttt{Baseline}s.
Therefore, the convergent policies of our methods not only efficiently solve the task of FedRL, but also generalize well to similar but unseen environments.

\subsection{Personalized FedRL}
We are now to demonstrate how the heuristic mentioned in Section \ref{heuristics} helps the personalization in the training of FedRL and how the learned personalized model enables us to quickly fit to any unseen environment.
\texttt{DQNAvg} and \texttt{DDPGAvg} with the heuristic are denoted as \texttt{PerDQNAvg} and \texttt{PerDDPGAvg}.

Figure \ref{Person-local} depicts the averaged performance of $N$ local policies in their corresponding environments with and without personalization heuristic.
Environment embeddings enable local policies to be personalized in the training process of \texttt{PerDQNAvg} and \texttt{PerDDPGAvg}, which helps to achieve better averaged local performance than the single aggregated policy of \texttt{DQNAvg} and \texttt{DDPGAvg}.
Moreover, when fitting the learned policy to any unseen environment, we merely adjust the environment embeddings from an averaged initialization.
Figure \ref{Person-general} reveals that such adjustment is enough for a quick fit to the novel environment and outperforms adjustment of both embeddings and policy network.

\section{Conclusion}

We have studied Federated Reinforcement Learning (FedRL) and addressed two issues: how to learn a single policy with uniformly good performance in all $n$ environments, and how to achieve personalization.
% Firstly, we propose algorithms, \texttt{QAvg} and \texttt{PAvg}, to learn one policy which performs uniformly well in all $n$ environments.
% Secondly, we propose heuristic for personalized FedRL which helps the local training in different environments and the generalization to any unseen environment.
The main difference from the existing FedRL work is that we assume that the $n$ environments have different state-transition functions.
We have  proposed two algorithms, \texttt{QAvg} and \texttt{PAvg}, which are federated extensions of Q-Learning and policy gradient.
Regarding their theoretical efficiency, we have analyzed their convergence and showed how environment heterogeneity affects the convergence.
We have also proposed a heuristic approach for personalization in FedRL, where environment embeddings are used to capture any specific environment.
Furthermore, such heuristic enables us to achieve generalization of convergent policies to fit any unseen environment via adjusting the embeddings.

\section*{Acknowledgments}
Jin and Zhang have been supported by the National Key Research and Development Project of China (No. 2018AAA0101004) and Beijing Natural Science Foundation (Z190001).

\bibliographystyle{plainnat}
\bibliography{references}

\begin{thebibliography}{32}
\providecommand{\natexlab}[1]{#1}
\providecommand{\url}[1]{\texttt{#1}}
\expandafter\ifx\csname urlstyle\endcsname\relax
  \providecommand{\doi}[1]{doi: #1}\else
  \providecommand{\doi}{doi: \begingroup \urlstyle{rm}\Url}\fi

\bibitem[Agarwal et~al.(2019)Agarwal, Kakade, Lee, and
  Mahajan]{agarwal2019theory}
Alekh Agarwal, Sham~M Kakade, Jason~D Lee, and Gaurav Mahajan.
\newblock On the theory of policy gradient methods: Optimality, approximation,
  and distribution shift.
\newblock \emph{arXiv preprint arXiv:1908.00261}, 2019.

\bibitem[Arivazhagan et~al.(2019)Arivazhagan, Aggarwal, Singh, and
  Choudhary]{arivazhagan2019federated}
Manoj~Ghuhan Arivazhagan, Vinay Aggarwal, Aaditya~Kumar Singh, and Sunav
  Choudhary.
\newblock Federated learning with personalization layers.
\newblock \emph{arXiv preprint arXiv:1912.00818}, 2019.

\bibitem[Bellman(1957)]{bellman1957dynamic}
R~Bellman.
\newblock Dynamic programming princeton university press princeton.
\newblock \emph{New Jersey Google Scholar}, 1957.

\bibitem[Bellman and Dreyfus(1959)]{bellman1959functional}
Richard Bellman and Stuart Dreyfus.
\newblock Functional approximations and dynamic programming.
\newblock \emph{Mathematical Tables and Other Aids to Computation}, pages
  247--251, 1959.

\bibitem[Bertsekas et~al.(1995)Bertsekas, Bertsekas, Bertsekas, and
  Bertsekas]{bertsekas1995dynamic}
Dimitri~P Bertsekas, Dimitri~P Bertsekas, Dimitri~P Bertsekas, and Dimitri~P
  Bertsekas.
\newblock \emph{Dynamic programming and optimal control}, volume~1.
\newblock Athena scientific Belmont, MA, 1995.

\bibitem[Brockman et~al.(2016)Brockman, Cheung, Pettersson, Schneider,
  Schulman, Tang, and Zaremba]{1606.01540}
Greg Brockman, Vicki Cheung, Ludwig Pettersson, Jonas Schneider, John Schulman,
  Jie Tang, and Wojciech Zaremba.
\newblock Openai gym, 2016.

\bibitem[Bui et~al.(2019)Bui, Malik, Goetz, Liu, Moon, Kumar, and
  Shin]{bui2019federated}
Duc Bui, Kshitiz Malik, Jack Goetz, Honglei Liu, Seungwhan Moon, Anuj Kumar,
  and Kang~G Shin.
\newblock Federated user representation learning.
\newblock \emph{arXiv preprint arXiv:1909.12535}, 2019.

\bibitem[Deng et~al.(2020)Deng, Kamani, and Mahdavi]{deng2020adaptive}
Yuyang Deng, Mohammad~Mahdi Kamani, and Mehrdad Mahdavi.
\newblock Adaptive personalized federated learning.
\newblock \emph{arXiv preprint arXiv:2003.13461}, 2020.

\bibitem[Doshi-Velez and Konidaris(2016)]{doshi2016hidden}
Finale Doshi-Velez and George Konidaris.
\newblock Hidden parameter markov decision processes: A semiparametric
  regression approach for discovering latent task parametrizations.
\newblock In \emph{IJCAI: proceedings of the conference}, volume 2016, page
  1432. NIH Public Access, 2016.

\bibitem[Espeholt et~al.(2018)Espeholt, Soyer, Munos, Simonyan, Mnih, Ward,
  Doron, Firoiu, Harley, Dunning, et~al.]{espeholt2018impala}
Lasse Espeholt, Hubert Soyer, Remi Munos, Karen Simonyan, Vlad Mnih, Tom Ward,
  Yotam Doron, Vlad Firoiu, Tim Harley, Iain Dunning, et~al.
\newblock Impala: Scalable distributed deep-rl with importance weighted
  actor-learner architectures.
\newblock In \emph{International Conference on Machine Learning}, pages
  1407--1416. PMLR, 2018.

\bibitem[Fan et~al.(2018)Fan, Zhu, Zhu, Liu, Zeng, Gupta, Creus-Costa,
  Savarese, and Fei-Fei]{fan2018surreal}
Linxi Fan, Yuke Zhu, Jiren Zhu, Zihua Liu, Orien Zeng, Anchit Gupta, Joan
  Creus-Costa, Silvio Savarese, and Li~Fei-Fei.
\newblock Surreal: Open-source reinforcement learning framework and robot
  manipulation benchmark.
\newblock In \emph{Conference on Robot Learning}, pages 767--782. PMLR, 2018.

\bibitem[Hessel et~al.(2018)Hessel, Modayil, Van~Hasselt, Schaul, Ostrovski,
  Dabney, Horgan, Piot, Azar, and Silver]{hessel2018rainbow}
Matteo Hessel, Joseph Modayil, Hado Van~Hasselt, Tom Schaul, Georg Ostrovski,
  Will Dabney, Dan Horgan, Bilal Piot, Mohammad Azar, and David Silver.
\newblock Rainbow: Combining improvements in deep reinforcement learning.
\newblock In \emph{Proceedings of the AAAI Conference on Artificial
  Intelligence}, volume~32, 2018.

\bibitem[Killian et~al.(2017)Killian, Konidaris, and
  Doshi-Velez]{killian2017robust}
Taylor Killian, George Konidaris, and Finale Doshi-Velez.
\newblock Robust and efficient transfer learning with hidden parameter markov
  decision processes.
\newblock In \emph{Proceedings of the AAAI Conference on Artificial
  Intelligence}, volume~31, 2017.

\bibitem[Levine et~al.(2016)Levine, Finn, Darrell, and Abbeel]{levine2016end}
Sergey Levine, Chelsea Finn, Trevor Darrell, and Pieter Abbeel.
\newblock End-to-end training of deep visuomotor policies.
\newblock \emph{The Journal of Machine Learning Research}, 17\penalty0
  (1):\penalty0 1334--1373, 2016.

\bibitem[Lillicrap et~al.(2015)Lillicrap, Hunt, Pritzel, Heess, Erez, Tassa,
  Silver, and Wierstra]{lillicrap2015continuous}
Timothy~P Lillicrap, Jonathan~J Hunt, Alexander Pritzel, Nicolas Heess, Tom
  Erez, Yuval Tassa, David Silver, and Daan Wierstra.
\newblock Continuous control with deep reinforcement learning.
\newblock \emph{arXiv preprint arXiv:1509.02971}, 2015.

\bibitem[Liu et~al.(2019)Liu, Wang, Liu, and Xu]{liu2019lifelong}
Boyi Liu, Lujia Wang, Ming Liu, and Chengzhong Xu.
\newblock Lifelong federated reinforcement learning: A learning architecture
  for navigation in cloud robotic systems.
\newblock \emph{arXiv preprint arXiv:1901.06455}, 2019.

\bibitem[Mahajan et~al.(2018)Mahajan, Agrawal, Keerthi, Sellamanickam, and
  Bottou]{mahajan2018efficient}
Dhruv Mahajan, Nikunj Agrawal, S~Sathiya Keerthi, Sundararajan Sellamanickam,
  and L{\'e}on Bottou.
\newblock An efficient distributed learning algorithm based on effective local
  functional approximations.
\newblock \emph{Journal of Machine Learning Research}, 19\penalty0
  (1):\penalty0 2942--2978, 2018.

\bibitem[Mansour et~al.(2020)Mansour, Mohri, Ro, and Suresh]{mansour2020three}
Yishay Mansour, Mehryar Mohri, Jae Ro, and Ananda~Theertha Suresh.
\newblock Three approaches for personalization with applications to federated
  learning.
\newblock \emph{arXiv preprint arXiv:2002.10619}, 2020.

\bibitem[Mnih et~al.(2015)Mnih, Kavukcuoglu, Silver, Rusu, Veness, Bellemare,
  Graves, Riedmiller, Fidjeland, Ostrovski, et~al.]{mnih2015human}
Volodymyr Mnih, Koray Kavukcuoglu, David Silver, Andrei~A Rusu, Joel Veness,
  Marc~G Bellemare, Alex Graves, Martin Riedmiller, Andreas~K Fidjeland, Georg
  Ostrovski, et~al.
\newblock Human-level control through deep reinforcement learning.
\newblock \emph{Nature}, 518\penalty0 (7540):\penalty0 529--533, 2015.

\bibitem[Mnih et~al.(2016)Mnih, Badia, Mirza, Graves, Lillicrap, Harley,
  Silver, and Kavukcuoglu]{mnih2016asynchronous}
Volodymyr Mnih, Adria~Puigdomenech Badia, Mehdi Mirza, Alex Graves, Timothy
  Lillicrap, Tim Harley, David Silver, and Koray Kavukcuoglu.
\newblock Asynchronous methods for deep reinforcement learning.
\newblock In \emph{International conference on machine learning}, pages
  1928--1937. PMLR, 2016.

\bibitem[Nadiger et~al.(2019)Nadiger, Kumar, and
  Abdelhak]{nadiger2019federated}
Chetan Nadiger, Anil Kumar, and Sherine Abdelhak.
\newblock Federated reinforcement learning for fast personalization.
\newblock In \emph{2019 IEEE Second International Conference on Artificial
  Intelligence and Knowledge Engineering (AIKE)}, pages 123--127. IEEE, 2019.

\bibitem[Paul et~al.(2019)Paul, Osborne, and Whiteson]{paul2019fingerprint}
Supratik Paul, Michael~A Osborne, and Shimon Whiteson.
\newblock Fingerprint policy optimisation for robust reinforcement learning.
\newblock In \emph{International Conference on Machine Learning}, pages
  5082--5091. PMLR, 2019.

\bibitem[Sahu et~al.(2018)Sahu, Li, Sanjabi, Zaheer, Talwalkar, and
  Smith]{sahu2018federated}
Anit~Kumar Sahu, Tian Li, Maziar Sanjabi, Manzil Zaheer, Ameet Talwalkar, and
  Virginia Smith.
\newblock Federated optimization for heterogeneous networks.
\newblock \emph{arXiv preprint arXiv:1812.06127}, 1\penalty0 (2):\penalty0 3,
  2018.

\bibitem[Silver et~al.(2016)Silver, Huang, Maddison, Guez, Sifre, Van
  Den~Driessche, Schrittwieser, Antonoglou, Panneershelvam, Lanctot,
  et~al.]{silver2016mastering}
David Silver, Aja Huang, Chris~J Maddison, Arthur Guez, Laurent Sifre, George
  Van Den~Driessche, Julian Schrittwieser, Ioannis Antonoglou, Veda
  Panneershelvam, Marc Lanctot, et~al.
\newblock Mastering the game of {G}o with deep neural networks and tree search.
\newblock \emph{Nature}, 529\penalty0 (7587):\penalty0 484--489, 2016.

\bibitem[Silver et~al.(2017)Silver, Schrittwieser, Simonyan, Antonoglou, Huang,
  Guez, Hubert, Baker, Lai, Bolton, et~al.]{silver2017mastering}
David Silver, Julian Schrittwieser, Karen Simonyan, Ioannis Antonoglou, Aja
  Huang, Arthur Guez, Thomas Hubert, Lucas Baker, Matthew Lai, Adrian Bolton,
  et~al.
\newblock Mastering the game of {G}o without human knowledge.
\newblock \emph{Nature}, 550\penalty0 (7676):\penalty0 354--359, 2017.

\bibitem[Sutton et~al.(1998)Sutton, Barto, et~al.]{sutton1998introduction}
Richard~S Sutton, Andrew~G Barto, et~al.
\newblock \emph{Introduction to reinforcement learning}, volume 135.
\newblock MIT press Cambridge, 1998.

\bibitem[Sutton et~al.(1999)Sutton, McAllester, Singh, Mansour,
  et~al.]{sutton1999policy}
Richard~S Sutton, David~A McAllester, Satinder~P Singh, Yishay Mansour, et~al.
\newblock Policy gradient methods for reinforcement learning with function
  approximation.
\newblock In \emph{NIPs}, volume~99, pages 1057--1063. Citeseer, 1999.

\bibitem[Teh et~al.(2017)Teh, Bapst, Czarnecki, Quan, Kirkpatrick, Hadsell,
  Heess, and Pascanu]{teh2017distral}
Yee~Whye Teh, Victor Bapst, Wojciech~Marian Czarnecki, John Quan, James
  Kirkpatrick, Raia Hadsell, Nicolas Heess, and Razvan Pascanu.
\newblock Distral: Robust multitask reinforcement learning.
\newblock \emph{arXiv preprint arXiv:1707.04175}, 2017.

\bibitem[Wang et~al.(2020)Wang, Wang, Li, Leung, and Taleb]{wang2020federated}
Xiaofei Wang, Chenyang Wang, Xiuhua Li, Victor~CM Leung, and Tarik Taleb.
\newblock Federated deep reinforcement learning for internet of things with
  decentralized cooperative edge caching.
\newblock \emph{IEEE Internet of Things Journal}, 7\penalty0 (10):\penalty0
  9441--9455, 2020.

\bibitem[Watkins and Dayan(1992)]{watkins1992q}
Christopher~JCH Watkins and Peter Dayan.
\newblock Q-learning.
\newblock \emph{Machine learning}, 8\penalty0 (3-4):\penalty0 279--292, 1992.

\bibitem[Zeng et~al.(2020)Zeng, Anwar, Doan, Romberg, and
  Raychowdhury]{zeng2020decentralized}
Sihan Zeng, Aqeel Anwar, Thinh Doan, Justin Romberg, and Arijit Raychowdhury.
\newblock A decentralized policy gradient approach to multi-task reinforcement
  learning.
\newblock \emph{arXiv preprint arXiv:2006.04338}, 2020.

\bibitem[Zhuo et~al.(2019)Zhuo, Feng, Xu, Yang, and Lin]{zhuo2019federated}
Hankz~Hankui Zhuo, Wenfeng Feng, Qian Xu, Qiang Yang, and Yufeng Lin.
\newblock Federated reinforcement learning.
\newblock \emph{arXiv preprint arXiv:1901.08277}, 2019.

\end{thebibliography}

\onecolumn
% \appendix
\aistatstitle{Supplementary Materials}
\section{Proof of Theorem \ref{fedrl-example}}
We are next to find a task of FedRL having the following property: For some initial distribution $d_0$, $\forall \pi_1^\star \in \argmax_{\pi} g_{d_0} (\pi)$,
there exist $d_1$ and $\tilde{\pi}$ such that $g_{d_0}(\tilde{\pi})<g_{d_0}(\pi_1^\star)$, but $g_{d_1} (\tilde{\pi}) > g_{d_1} (\pi^\star_1 ) $. 

Consider the task of FedRL composed of the following two environments:
\begin{figure}[!htb]
\centering
\includegraphics[width=0.7\textwidth]{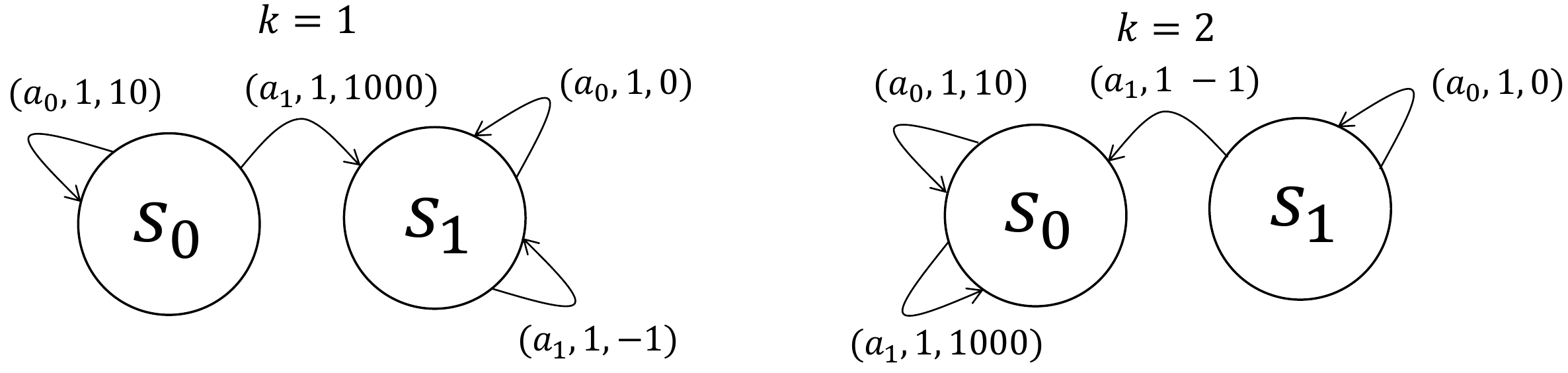}
\caption{Counterexample in FedRL: The triple means (action, probability, reward) and $\gamma=0.9$. Note that these two environments share the same action space $\{s_0,s_1\}$, same state space $\{a_0,a_1\}$, and same reward function.}
\label{Fig1}
\end{figure}
\vspace{-0.1in}
\begin{proof}
In the task of FedRL mentioned in Figure \ref{Fig1}, we use two real numbers $(p, q)\in [0, 1]^2$ to represent any policy $\pi$, where $p=\pi(a_0|s^0)$ and $q=\pi(a_0|s^1)$.
Let the initial state distribution be $d_0=(1,0)$, which means $s_0$ is initial state.
Therefore, the objective of FedRL is formulated as follows:
\begin{equation*}
    \max_{\pi} g_{d_0}(\pi)=\frac{1}{2}\{V_\pi^1(s_0)+V_\pi^2(s_0)\}.
\end{equation*}
It is a continuous function of policy $\pi=(p,q)$ whose support is compact.
Therefore, the optimal policy exists.
Such optimal policy $\pi^*=(p^*,q^*)$ is not unique, but we assert that $p^*<1, q^*=1$.

If $p^*=1$, then the cumulative rewards in FedRL is $\sum_{t=0}^\infty \gamma^t 10\approx 100$.
$p=0$ beats $p^*=1$ since $p=0$ earns $1000$ at the very first step in both environments.
In this way, we prove that $p^*<1$.
For the choices of $q^*$, if $q^*<1$, there is positive probability to take $a_1$ at $s_1$.
Since there is no probability to reach $s_1$ in the second environment when the initial state is $s_0$, we merely consider the first environment for the selection of $q^*$.
$q^*<1$ means there is positive probability to select $a_1$ at $s_1$.
Yet selection of $a_1$ at $s_1$ leads to negative reward $-1$, which is obviously inferior to the selection of $a_0$ whose reward is $0$ at $s_1$.
Therefore, we prove that $q^*=1$.

However, $\pi^*=(p^*,q^*)$ determined above is no longer the optimal solution when the initial state distribution changes to $d_1=(0,1)$.
When starting from $s_1$, $q^*=1$ means the agent never select $a_1$ at $s_1$ and the agent never reaches $s_0$ in the second environment.
Although selection of $a_1$ leads to a negative reward of $-2$ in both environments, yet the positive reward of actions at $s_0$ obviously compensates for the loss of choosing $a_1$ at $s_1$.
Therefore, $\pi^*=(p^*,q^*)$ above is no longer the optimal policy when the initial state distribution is formulated as $d_1=(0,1)$.
\end{proof}

\begin{figure}[!htb]
\centering
\includegraphics[width=0.7\textwidth]{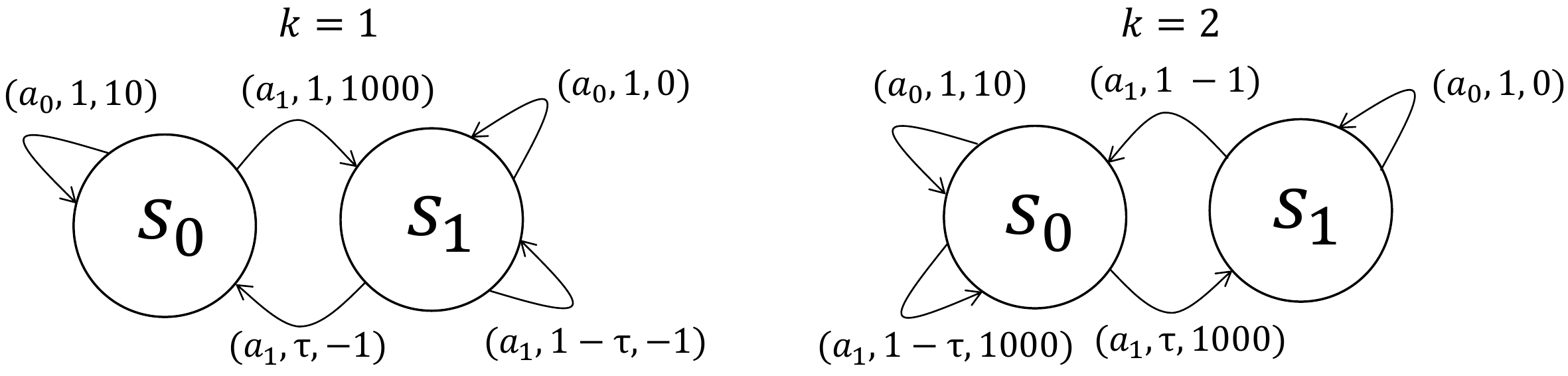}
\caption{The modified tasks of FedRL with two connected and irreducible environments.}
\label{Fig2}
\end{figure}

However, the above example is, to some extent, tricky, since both of the involved environments in FedRL are not irreducible.
Therefore, we propose the following example of FedRL with a positive lead probability $\tau>0$ in Figure \ref{Fig2}.
We claim that if leak probability $\tau$ is small enough, the previous argument still holds.
\begin{proof}
    In the task of FedRL mentioned in Figure \ref{Fig2}, we consider the following two initial state distributions $\{d_0=(1,0),d_1=(0,1)\}$.
    In this way, the objective functions in these two environments are denoted as $g_{d_0}(\pi)=\bar{V}_\pi^\tau(s_0)$ and $g_{d_1}(\pi)=\bar{V}_\pi^\tau(s_1)$, where $\tau$ is the leak probability and $\pi=(p,q)$.
    It is clear that both $g_{d_0}(\pi)$ and $g_{d_1}(\pi)$ are uniform continuous with respect to $(p,q,\tau)\in (0,1)^3$.
    We denote the set of optimal solutions w.r.t. these two initial state distributions as $\Gamma_0^\tau=\{(p_0^\tau,p_0^\tau)\}$ and $\Gamma_1^\tau=\{(p_1^\tau,p_1^\tau)\}$, and their objective values as $\tilde{M}_0^\tau$ and $\tilde{M}_0^\tau$.
    It is easy to see that both $\Gamma_0^\tau$ and $\Gamma_1^\tau$ are compact sets.
    
    Taking FedRL described in Figure \ref{Fig1} as a special case with $\tau=0$, we have already proved that $p_0^0<1,q_0^0=1,\forall (p_0^0,q_0^0)\in\Gamma_0^0$, and $q_1^0<1,\forall (p_1^0,q_1^0)\in\Gamma_1^0$.
    We claim that when $\tau$ is sufficiently small, there exists $\delta>0$, \texttt{s.t.} $q_0^\tau>1-\delta>a_1^\tau$.
    
    Firstly, we define $\alpha=\sum_{\Gamma_1^0}q_1^0$ with $\alpha<1$ by the compactness of $\Gamma_1^0$.
    Then we derive $M_1=\max_{q\geq\frac{1+\alpha}{2}}g_{d_1}^0(\pi)$ with $M_1\leq \tilde{M}_1^0$.
    The uniform continuity of objective values w.r.t. $\tau$ tells us: $\exists \bar{\tau}_1>0,~\forall \pi=(p,q),~\forall \tau<\bar{\tau}_1,~\texttt{s.t.}|g_{d_1}^\tau(\pi)-g_{d_1}^0(\pi)|<\frac{\tilde{M}_1^0-M_1}{4}$.
    Therefore, it is easy to tell $\forall \tau<\bar{\tau}_1,q_1^\tau<1-\delta$, where $\delta = \frac{1-\alpha}{2}$.
    
    Following the same strategy, we are able to derive that $\exists \bar{\tau}_2>0,\forall\tau<\bar{\tau}_2,q_0^\tau>1-\delta$, which along with $\forall \tau<\bar{\tau}_1,q_1^\tau<1-\delta$ leads to a contradiction.
\end{proof}

\section{Proof of Lemma\ref{lem:v:lowerbound}, \ref{lem:v:bound}}
\begin{proof}[Proof of Lemma 1]
The lower bound of weighted value function $\bar{V}_\pi$ is derived as:
\begin{align*}
    \bar{V}^\pi=&\frac{1}{n}\sum_{i=1}^n V^\pi_i = \frac{1}{n}\sum_{i=1}^n (I_{|\mathcal{S}|}-\gamma \PM^\pi_i)^{-1}R^\pi\\
    =&\frac{1}{n}\sum_{i=1}^n\sum_{k=0}^\infty (\gamma \PM^\pi_i)^k R^\pi\\
    \succcurlyeq&\sum_{k=0}^\infty(\gamma\frac{1}{n}\sum_{i=1}^n\PM^\pi_i)^k R^\pi=(I_{|\SM|}-\gamma\bar{\PM}^\pi)^{-1}R^\pi=V_I^\pi,
\end{align*}
where the second and fourth equalities come from $(I-A)^{-1}=\sum_{k=0}^\infty A^k$ with setting $A^0$ as $I$, and $\bar{V}^\pi \succcurlyeq V_I^\pi$ indicating $\bar{V}^\pi(s)\geq V_I^\pi(s)$.
\end{proof}
\begin{proof}[Proof of Lemma 2]
In fact, by definition of $\bar{V}^\pi$, for any $s\in\SM$, we have:
\begin{align*}
    \left|V^\pi_I(s)-\bar{V}^\pi(s)\right|&\le\frac{1}{n}\sum_{k=1}^n\left|V^\pi_I(s)-\bar{V}_k^\pi(s)\right|\\
    &\le\frac{1}{n}\sum_{k=1}^n\left\|V^\pi_I-\bar{V}_k^\pi\right\|_\infty
\end{align*}
By Bellman equation, we have:
\begin{align*}
    \left|V^\pi_I(s)-\bar{V}_k^\pi(s)\right|&=\gamma\left|\sum_{s'}\left(\frac{1}{n}\sum_{k=1}^n\PM_k^\pi(s'|s)V^\pi_I(s')-\PM_k^\pi(s'|s)V_k^\pi(s')\right)\right|\\
    &\le\frac{\kappa_1\gamma}{1-\gamma}+\gamma\|V^\pi_I-\bar{V}_k^\pi\|_\infty
\end{align*}
Thus, we have the final conclusion:
\begin{align*}
    \|V^\pi_I-\bar{V}^\pi\|_\infty\le\frac{\gamma\kappa_1}{(1-\gamma)^2}
\end{align*}
\end{proof}

\section{Proof of Theorem \ref{qavg-converge}}
Denote the Bellman Operator in the $k$-th environment as:
\begin{align*}
    \T_k Q(s,a) = R(s,a)+\gamma \sum_{s'}P_k(s'|s,a)\max_{a'}Q(s',a')
\end{align*}
The average Bellman Operator as:
\begin{align*}
    \T Q = \frac{1}{n}\sum_{k=1}^n \T_k Q
\end{align*}

\begin{thm}
    \label{thm: contraction}
    $\T$ is a $\gamma$-contractor. For any $Q_1$ and $Q_2$, it satisfies:
    \begin{align*}
        \|\T Q_1-\T Q_2\|_\infty\le\gamma\|Q_1-Q_2\|_\infty
    \end{align*}
\end{thm}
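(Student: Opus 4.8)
The plan is to reduce the claim for the averaged operator $\T = \frac{1}{n}\sum_{k=1}^n \T_k$ to the standard fact that each single-environment Bellman optimality operator $\T_k$ is a $\gamma$-contraction in $\|\cdot\|_\infty$, and then use the triangle inequality together with the fact that an average of contractions with a common contraction factor is again a contraction with the same factor. Concretely, I would first establish the per-environment bound: for any $s,a$,
\begin{align*}
    \big| \T_k Q_1(s,a) - \T_k Q_2(s,a) \big|
    &= \gamma \Big| \sum_{s'} P_k(s'|s,a) \big( \max_{a'} Q_1(s',a') - \max_{a'} Q_2(s',a') \big) \Big| \\
    &\le \gamma \sum_{s'} P_k(s'|s,a) \big| \max_{a'} Q_1(s',a') - \max_{a'} Q_2(s',a') \big| \\
    &\le \gamma \|Q_1 - Q_2\|_\infty ,
\end{align*}
where the reward terms $R(s,a)$ cancel, the first inequality is the triangle inequality applied to a convex combination, and the last step uses the elementary inequality $|\max_{a'} Q_1(s',a') - \max_{a'} Q_2(s',a')| \le \max_{a'} |Q_1(s',a') - Q_2(s',a')| \le \|Q_1 - Q_2\|_\infty$ together with $\sum_{s'} P_k(s'|s,a) = 1$.

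Next I would average over $k$. Since $\T Q_1(s,a) - \T Q_2(s,a) = \frac{1}{n}\sum_{k=1}^n \big( \T_k Q_1(s,a) - \T_k Q_2(s,a) \big)$, the triangle inequality gives
\begin{align*}
    \big| \T Q_1(s,a) - \T Q_2(s,a) \big|
    \le \frac{1}{n}\sum_{k=1}^n \big| \T_k Q_1(s,a) - \T_k Q_2(s,a) \big|
    \le \gamma \|Q_1 - Q_2\|_\infty .
\end{align*}
Taking the supremum over all pairs $(s,a)$ on the left-hand side yields $\|\T Q_1 - \T Q_2\|_\infty \le \gamma \|Q_1 - Q_2\|_\infty$, which is exactly the claimed statement. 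Alternatively, one can observe that $\T$ is itself the Bellman optimality operator associated with the imaginary environment's reward-and-average-transition only in the $E=1$ linearized sense noted in the remark, but the direct averaging argument above is cleaner and avoids having to justify that identification.

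The only slightly delicate point — and the one I would be careful about — is the handling of the $\max_{a'}$ inside each $\T_k$: unlike a linear operator, the optimality operator is nonlinear, so one must invoke the ``max is nonexpansive'' lemma $|\max_{a'} f(a') - \max_{a'} g(a')| \le \max_{a'}|f(a') - g(a')|$ rather than pulling the difference through linearly. Everything else is routine: the reward cancellation, the convex-combination bound using that $P_k(\cdot|s,a)$ is a probability distribution, and the final average-of-contractions step. No structural obstacle is expected; the proof is short once the nonexpansiveness of $\max$ is stated.
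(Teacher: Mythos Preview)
Your proposal is correct and follows essentially the same approach as the paper: apply the triangle inequality to bound $\|\T Q_1 - \T Q_2\|_\infty$ by $\frac{1}{n}\sum_{k=1}^n \|\T_k Q_1 - \T_k Q_2\|_\infty$, then invoke the standard $\gamma$-contraction property of each single-environment Bellman optimality operator $\T_k$. The paper's proof is a terse two-line version of exactly this (with what appears to be a typo in the final bound, writing $\gamma\|\T_k Q_1 -\T_k Q_2\|_\infty$ where $\gamma\|Q_1 - Q_2\|_\infty$ is clearly intended); your write-up simply unpacks the per-environment contraction step in full, including the nonexpansiveness of $\max$, which the paper takes for granted.
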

\begin{proof}
    By definition of $\T$, we have:
    \begin{align*}
        \|\T Q_1 - \T Q_2\|_\infty&\le\frac{1}{n}\sum_{k=1}^n \|\T_k Q_1 -\T_k Q_2\|_\infty\\
        &\le\gamma\|\T_k Q_1 -\T_k Q_2\|_\infty
    \end{align*}
\end{proof}

By Theorem~\ref{thm: contraction}, there exists a fixed point $Q^*$ satisfies $\T Q^*=Q^*$, which is also the optimal Q value function w.r.t. standard MDP with transition dynamics $\bar{\PM}$.

Besides, we also define a general version of average Bellman Operator:
\begin{align*}
    \T_E=\frac{1}{n}\sum_{k=1}^n \T_k^E
\end{align*}
and a smooth version:
\begin{align*}
    \widetilde{\T}_E=\frac{1}{n}\sum_{k=1}^n \prod_{t=1}^E(\lambda_t\T_k+(1-\lambda_t)Id)
\end{align*}

In other words, the update rule is:
\begin{align*}
    &Q_{t+1}^k = (1-\lambda_t) Q_{t}^k +\lambda_t\T_k Q_t^k\\
    &Q_{t+1}^k = \left\{
        \begin{aligned}
            &\frac{1}{n}\sum_{k=1}^n Q_{t+1}^k, \hspace{4pt}&\text{if $t+1\in\mathcal{I}_E$}\\
            &Q_{t+1}^k, \hspace{4pt}&\text{if $t+1\not\in\mathcal{I}_E$}
        \end{aligned}
    \right.
\end{align*}
We also denote $\bar{Q}_t=\frac{1}{n}\sum_{k=1}^n Q_{t}^k$.

\begin{lem}[One step recursion]
    \label{lem: Q_recur}
    We have:
    \begin{align*}
        \|\bar{Q}_{t+1}-Q^*\|_\infty\le(1-(1-\gamma)\lambda_t)\|\bar{Q}_{t}-Q^*\|_\infty+\frac{\lambda_t\gamma}{n}\sum_{k=1}^n\|Q_t^k-\bar{Q}_t\|_\infty
    \end{align*}
\end{lem}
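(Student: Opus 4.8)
\textbf{Proof proposal for Lemma~\ref{lem: Q_recur}.}
The plan is to track the distance $\|\bar Q_{t+1} - Q^*\|_\infty$ by first writing $\bar Q_{t+1}$ in terms of the local updates and then inserting the average Bellman operator $\T$ so that its $\gamma$-contraction (Theorem~\ref{thm: contraction}) and the fixed-point identity $\T Q^* = Q^*$ can be used. First I would note that regardless of whether $t+1$ is a communication round, averaging the local update rule gives
\begin{align*}
    \bar Q_{t+1} = \frac{1}{n}\sum_{k=1}^n\Big[(1-\lambda_t) Q_t^k + \lambda_t \T_k Q_t^k\Big] = (1-\lambda_t)\bar Q_t + \frac{\lambda_t}{n}\sum_{k=1}^n \T_k Q_t^k,
\end{align*}
since the subsequent aggregation step (if any) does not change the average. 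The key algebraic move is to compare $\frac{1}{n}\sum_k \T_k Q_t^k$ with $\T\bar Q_t = \frac{1}{n}\sum_k \T_k\bar Q_t$: subtracting $Q^* = (1-\lambda_t)Q^* + \lambda_t \T Q^*$ and regrouping yields
\begin{align*}
    \bar Q_{t+1} - Q^* = (1-\lambda_t)(\bar Q_t - Q^*) + \lambda_t\big(\T\bar Q_t - \T Q^*\big) + \frac{\lambda_t}{n}\sum_{k=1}^n\big(\T_k Q_t^k - \T_k \bar Q_t\big).
\end{align*}

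Next I would take $\|\cdot\|_\infty$ of both sides and apply the triangle inequality termwise. The first term contributes $(1-\lambda_t)\|\bar Q_t - Q^*\|_\infty$ (using $\lambda_t \le 1$). For the second term, Theorem~\ref{thm: contraction} gives $\|\T\bar Q_t - \T Q^*\|_\infty \le \gamma\|\bar Q_t - Q^*\|_\infty$, so it contributes $\lambda_t\gamma\|\bar Q_t - Q^*\|_\infty$; combined with the first term this is $(1-(1-\gamma)\lambda_t)\|\bar Q_t - Q^*\|_\infty$. For the third term, I would use that each $\T_k$ is itself a $\gamma$-contraction in $\|\cdot\|_\infty$ (this is standard and is exactly the per-environment bound $\|\T_k Q_1 - \T_k Q_2\|_\infty \le \gamma\|Q_1 - Q_2\|_\infty$ used inside the proof of Theorem~\ref{thm: contraction}), so $\|\T_k Q_t^k - \T_k\bar Q_t\|_\infty \le \gamma\|Q_t^k - \bar Q_t\|_\infty$; averaging over $k$ and pulling $\lambda_t$ out gives the claimed $\frac{\lambda_t\gamma}{n}\sum_{k=1}^n\|Q_t^k - \bar Q_t\|_\infty$. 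Assembling the three bounds gives the lemma.

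The main obstacle, such as it is, is purely bookkeeping: making sure the decomposition is the ``right'' one so that the contraction is applied to $\bar Q_t$ (the quantity we want to appear on the right-hand side) rather than to the individual $Q_t^k$, and handling the two cases $t+1\in\mathcal I_E$ versus $t+1\notin\mathcal I_E$ uniformly by observing that aggregation leaves $\bar Q_{t+1}$ invariant. There is no deep difficulty here; the lemma is the per-iteration engine that will later be unrolled (together with a separate bound on the ``client drift'' $\frac{1}{n}\sum_k\|Q_t^k - \bar Q_t\|_\infty$ accumulated over the at most $E$ steps since the last communication) to prove the rate in Theorem~\ref{qavg-converge}.
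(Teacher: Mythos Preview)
Your proof is correct and essentially the same as the paper's. The only cosmetic difference is where the pivot $\bar Q_t$ is inserted: you split $\T_k Q_t^k - \T_k Q^*$ through $\T_k\bar Q_t$ before taking norms, whereas the paper first bounds $\|\T_k Q_t^k - \T_k Q^*\|_\infty \le \gamma\|Q_t^k - Q^*\|_\infty$ and then applies the triangle inequality $\|Q_t^k - Q^*\|_\infty \le \|\bar Q_t - Q^*\|_\infty + \|Q_t^k - \bar Q_t\|_\infty$; both routes use the same contraction facts and arrive at the identical inequality.
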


\begin{proof}
    As $Q^*=\frac{1}{n}\sum_{k=1}^n \T_k Q^*$, we have:
    \begin{align*}
        \|\bar{Q}_{t+1}-Q^*\|_\infty=&\|(1-\lambda_t)\bar{Q}_t+\frac{\lambda_t}{n}\sum_{k=1}^n \T_k Q_t^k-Q^*\|_\infty\\
        =&\|(1-\lambda_t)(\bar{Q}_t-Q^*)+\frac{\lambda_t}{n}\sum_{k=1}^n (\T_k Q_t^k-\T_k Q^*)\|_\infty\\
        \le&(1-\lambda_t)\|\bar{Q}_t-Q^*\|_\infty+\frac{\lambda_t}{n}\sum_{k=1}^n \|\T_k Q_t^k-\T_k Q^*\|_\infty\\
        \le&(1-\lambda_t)\|\bar{Q}_t-Q^*\|_\infty+\frac{\gamma\lambda_t}{n}\sum_{k=1}^n \| Q_t^k-Q^*\|_\infty\\
        \le&(1-(1-\gamma)\lambda_t)\|\bar{Q}_t-Q^*\|_\infty+\frac{\gamma\lambda_t}{n}\sum_{k=1}^n \|Q_t^k-\bar{Q}_t\|_\infty
    \end{align*}
\end{proof}

\begin{lem}[Value variance]
    \label{lem: Q_var}
    Suppose $\lambda_t\le2\lambda_{t+E}$ and $Q_0^k\in[0,\frac{1}{1-\gamma}]$, we have:
    \begin{align*}
        \frac{1}{n}\sum_{k=1}^n \|Q_t^k-\bar{Q}_t\|_\infty\le \frac{4\lambda_t(E-1)}{(1-\gamma)}
    \end{align*}
\end{lem}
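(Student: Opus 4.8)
The object to control is the \emph{client drift} $V_t\triangleq\frac1n\sum_{k=1}^n\|Q_t^k-\bar Q_t\|_\infty$, and the plan is to track how it grows from the most recent communication round. As a preliminary I would record the box invariant: under the usual normalization $R(s,a)\in[0,1]$, every $\T_k$ maps $\{Q:0\le Q\le\tfrac1{1-\gamma}\}$ into itself, and both a local step $(1-\lambda_t)Q+\lambda_t\T_kQ$ (a convex combination, using $\lambda_t\in(0,1]$) and an aggregation (an average) keep us in this box; hence an easy induction on $t$ gives $Q_t^k,\,\T_kQ_t^k\in[0,\tfrac1{1-\gamma}]$ for all $t,k$.

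Fix $t$ and let $t_0\le t$ be the most recent communication round, so there are at most $E-1$ local updates since then ($0\le t-t_0\le E-1$) and, right after the aggregation at $t_0$, all local tables equal $\bar Q_{t_0}$, i.e. $V_{t_0}=0$. For a local step $t_0\le t'<t$, I would subtract $\bar Q_{t'+1}=(1-\lambda_{t'})\bar Q_{t'}+\tfrac{\lambda_{t'}}{n}\sum_j\T_jQ_{t'}^j$ from $Q_{t'+1}^k=(1-\lambda_{t'})Q_{t'}^k+\lambda_{t'}\T_kQ_{t'}^k$, take $\|\cdot\|_\infty$, average over $k$, and use $\big\|\T_kQ_{t'}^k-\tfrac1n\sum_j\T_jQ_{t'}^j\big\|_\infty\le\tfrac{2}{1-\gamma}$ (triangle inequality plus the box invariant) to obtain the recursion $V_{t'+1}\le(1-\lambda_{t'})V_{t'}+\tfrac{2\lambda_{t'}}{1-\gamma}$.

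Unrolling this from $V_{t_0}=0$ and dropping the factors $\prod(1-\lambda_\tau)\le1$ gives $V_t\le\tfrac{2}{1-\gamma}\sum_{t'=t_0}^{t-1}\lambda_{t'}$. The last step is to bound every step size in this window by $2\lambda_t$: $(\lambda_t)$ is nonincreasing so $\lambda_{t'}\le\lambda_{t_0}$; and $t_0\ge t-(E-1)$ gives $t_0+E>t$, hence $\lambda_{t_0+E}\le\lambda_t$, which together with the hypothesis $\lambda_{t_0}\le2\lambda_{t_0+E}$ yields $\lambda_{t'}\le2\lambda_t$. Since the sum has at most $E-1$ terms, $V_t\le\tfrac{2}{1-\gamma}\cdot(E-1)\cdot2\lambda_t=\tfrac{4\lambda_t(E-1)}{1-\gamma}$.

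This is a standard FedAvg-style drift computation, so I do not expect a serious obstacle; the only points needing care are the bookkeeping that ``most recent communication round'' yields the factor $E-1$ (not $E$) and the step-size chaining $\lambda_{t'}\le\lambda_{t_0}\le2\lambda_{t_0+E}\le2\lambda_t$, which is exactly where the assumption $\lambda_t\le2\lambda_{t+E}$ enters. (One also uses $\lambda_t\le1$ for the box invariant and for $\prod(1-\lambda_\tau)\le1$; this holds for the schedule $\lambda_t=\tfrac{2}{(1-\gamma)(t+E)}$ of Theorem~\ref{qavg-converge} once $t+E\ge2$, and can otherwise be assumed.)
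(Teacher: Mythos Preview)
Your argument is correct. Both your route and the paper's land on the same bound using the same ingredients (the box invariant $Q_t^k,\T_kQ_t^k\in[0,\tfrac1{1-\gamma}]$, anchoring at the last communication round $t_0$, and the step-size chaining $\lambda_{t'}\le\lambda_{t_0}\le2\lambda_{t_0+E}\le2\lambda_t$), but the intermediate bookkeeping differs. You set up a one-step recursion $V_{t'+1}\le(1-\lambda_{t'})V_{t'}+\tfrac{2\lambda_{t'}}{1-\gamma}$ for the centered quantity and unroll it. The paper instead anchors each local table to the common value $\bar Q_{t_0}=Q_{t_0}^k$: it first uses the mean-centering inequality $\tfrac1n\sum_k\|X_k-\bar X\|_\infty\le\tfrac2n\sum_k\|X_k\|_\infty$ (with $X_k=Q_t^k-\bar Q_{t_0}$) to replace $\|Q_t^k-\bar Q_t\|_\infty$ by $2\|Q_t^k-\bar Q_{t_0}\|_\infty$, and then telescopes $Q_t^k-Q_{t_0}^k=\sum_{t'=t_0}^{t-1}\lambda_{t'}(\T_kQ_{t'}^k-Q_{t'}^k)$ together with $\|\T_kQ-Q\|_\infty\le\tfrac1{1-\gamma}$. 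Amusingly, the constant $4$ arises for different reasons: in the paper it is $2\,(\text{centering})\times 2\,(\text{step-size})$, whereas in your proof it is $2\,(\text{loose triangle bound on }\|\T_kQ-\text{avg}\|_\infty)\times 2\,(\text{step-size})$; in fact your per-step deviation could be sharpened to $\tfrac1{1-\gamma}$ since both terms lie in $[0,\tfrac1{1-\gamma}]$, which would halve the constant. Your recursion-based version is the more common FedAvg-style presentation and is arguably cleaner.
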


\begin{proof}
    Noting that $\mathbb{E}\|X-EX\|_\infty\le2\mathbb{E}\|X\|_\infty$, and for $\forall t$, there exists $t_0\le t$ and $t-t_0\le E-1$, such that $Q_{t_0}^k=\bar{Q}_{t_0}$. Thus we have:
    \begin{align*}
        \frac{1}{n}\sum_{k=1}^n \|Q_t^k-\bar{Q}_t\|_\infty&=\frac{1}{n}\sum_{k=1}^n \|Q_t^k-\bar{Q}_{t_0}+\bar{Q}_{t_0}-\bar{Q}_t\|_\infty\\
        &\le\frac{2}{n}\sum_{k=1}^n \|Q_t^k-\bar{Q}_{t_0}\|_\infty\\
        &=\frac{2}{n}\sum_{k=1}^n \|\sum_{t'=t_0}^{t-1}\lambda_{t'}\left(\T_k Q_{t'}^k - Q_{t'}^k\right)\|_\infty\\
        &\le\frac{2}{n}\sum_{k=1}^n \sum_{t'=t_0}^{t-1}\lambda_{t'}\|\T_k Q_{t'}^k - Q_{t'}^k\|_\infty\\
        &\le\frac{4\lambda_t(E-1)}{(1-\gamma)}
    \end{align*}
    where the last inequality holds by $Q_t^k\in[0,\frac{1}{1-\gamma}]$.
\end{proof}

\begin{proof}[Proof of Theorem \ref{qavg-converge}]
    By Lemma~\ref{lem: Q_recur} and Lemma~\ref{lem: Q_var}, we have:
    \begin{align*}
        \|\bar{Q}_{t+1}-Q^*\|_\infty\le(1-(1-\gamma)\lambda_t)\|\bar{Q}_t- Q^*\|_\infty+\frac{4\lambda_t^2\gamma(E-1)}{(1-\gamma)}
    \end{align*}
    To simplify, we denote $\Delta_{t+1}=\|\bar{Q}_{t+1}-Q^*\|_\infty$ and $C=\frac{4\gamma(E-1)}{(1-\gamma)}$, which leads to:
    \begin{align*}
        \Delta_{t+1}\le(1-(1-\gamma)\lambda_t)\Delta_t+\lambda_t^2\cdot C
    \end{align*}
    By setting $\lambda_t=\frac{\alpha}{t+\beta}$, we will prove $\Delta_t\le\frac{\zeta}{t+\beta}$ recursively:
    \begin{small}
    \begin{align*}
        \Delta_{t+1}&\le(1-(1-\gamma)\lambda_t)\frac{\zeta}{t+\beta}+\lambda_t^2\cdot C\\
        &=\frac{(t+\beta-1)\zeta}{(t+\beta)^2}+\frac{(1-(1-\gamma)\alpha)\zeta+\alpha^2\cdot C}{(t+\beta)^2}\\
        &\le\frac{\zeta}{t+\beta+1}
    \end{align*}
    \end{small}
    Trivially, we can set $\alpha=\frac{2}{1-\gamma}$ and $\zeta=\frac{4C}{(1-\gamma)^2}=\frac{16\gamma(E-1)}{(1-\gamma)^3}$. Besides, to satisfy $\lambda_t\le2\lambda_{t+E}$, we can set $\beta=E$. Thus, we have:
    \begin{align*}
        \|\bar{Q}_t-Q^*\|_\infty\le\frac{16\gamma(E-1)}{(1-\gamma)^3(t+E)}
    \end{align*}
\end{proof}

\section{Proof of Theorem \ref{pavg-converge}}
\begin{thm}[Global Optimality]
    Denote the optimal policy as $\pi^*$, for any  given policy $\pi\in\Delta(\AM)^{\SM}$, we have following inequality holds:
    \begin{align*}
        \frac{1}{n}\sum_{k=1}^{n}V^{\pi^*}_k(\mu)-\frac{1}{n}\sum_{k=1}^{n}V^{\pi}_k(\mu)\le 2(2L\eta+1)\rho\sqrt{|\SM|}\left(\kappa+\|G^\eta(\pi)\|_2\right)
    \end{align*}
    where
    \begin{align*}
        &G^\eta(\pi)=\frac{\proj\left(\pi+\eta\frac{1}{n}\sum_{k=1}^n \nabla_\pi V^{\pi}_{k}(\mu)\right)-\pi}{\eta}\\
        &G^\eta_k(\pi)=\frac{\proj\left(\pi+\eta \nabla_\pi V^{\pi}_{k}(\mu)\right)-\pi}{\eta}
    \end{align*}
\end{thm}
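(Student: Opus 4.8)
The plan is to chain three estimates: a per-environment gradient-domination step that introduces the mismatch coefficient $\rho$; a heterogeneity step that replaces the average of the $n$ per-environment first-order gaps by the first-order gap of the \emph{averaged} objective $g_\mu(\pi)\triangleq\frac1n\sum_{k=1}^nV^\pi_k(\mu)$ at the cost of an additive $\kappa$; and a projected-gradient stationarity step that controls that last gap by $\|G^\eta(\pi)\|_2$.

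First I would run the standard direct-parametrization gradient-domination argument inside each $\MM_k$, but against the \emph{global} comparator $\pi^\star$. The performance-difference lemma gives $V^{\pi^\star}_k(\mu)-V^\pi_k(\mu)=\frac1{1-\gamma}\EB_{s\sim d^{\pi^\star}_{\mu,k}}\sum_a\pi^\star(a|s)A^\pi_k(s,a)$, where $d^{\pi^\star}_{\mu,k}$ is the discounted state-visitation of $\pi^\star$ in $\MM_k$ started from $\mu$. Bounding the inner sum by $h_k(s)\triangleq\max_{p\in\Delta(\AM)}\sum_a p(a)A^\pi_k(s,a)$, which is nonnegative (take $p=\pi(\cdot\,|\,s)$), lets me change the reference measure first to $\mu$ (cost $\|d^{\pi^\star}_{\mu,k}/\mu\|_\infty$) and then to $d^\pi_{\mu,k}$ (using $d^\pi_{\mu,k}\ge(1-\gamma)\mu$); the product structure of $\Delta(\AM)^{\SM}$ and the policy-gradient identity $\partial V^\pi_k(\mu)/\partial\pi(a|s)=\frac1{1-\gamma}d^\pi_{\mu,k}(s)Q^\pi_k(s,a)$ then collapse the right-hand side to $\max_{\bar\pi}\langle\bar\pi-\pi,\nabla_\pi V^\pi_k(\mu)\rangle$. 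Averaging over $k$ and taking $\rho$ to be the uniform-in-$k$ version of $\frac1{1-\gamma}\|d^{\pi^\star}_{\mu,k}/\mu\|_\infty$ yields $g_\mu(\pi^\star)-g_\mu(\pi)\le\rho\cdot\frac1n\sum_k\max_{\bar\pi}\langle\bar\pi-\pi,\nabla_\pi V^\pi_k(\mu)\rangle$.

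Next I would split $\nabla_\pi V^\pi_k(\mu)=\nabla g_\mu(\pi)+(\nabla_\pi V^\pi_k(\mu)-\nabla g_\mu(\pi))$, use that the maximum of a sum is at most the sum of the maxima, and bound the perturbation by Cauchy--Schwarz together with the $\ell_2$-diameter estimate $\|\bar\pi-\pi\|_2\le\sqrt{2|\SM|}$ for $\Delta(\AM)^{\SM}$; averaging over $k$ and using that the environment heterogeneity $\kappa$ dominates $\frac1n\sum_k\|\nabla_\pi V^\pi_k(\mu)-\nabla g_\mu(\pi)\|_2$ gives $\frac1n\sum_k\max_{\bar\pi}\langle\bar\pi-\pi,\nabla_\pi V^\pi_k(\mu)\rangle\le\max_{\bar\pi}\langle\bar\pi-\pi,\nabla g_\mu(\pi)\rangle+\sqrt{2|\SM|}\,\kappa$. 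Finally, with $\pi^+\triangleq\proj_{\Delta(\AM)^{\SM}}(\pi+\eta\nabla g_\mu(\pi))$ so that $G^\eta(\pi)=(\pi^+-\pi)/\eta$, the variational inequality for the Euclidean projection gives $\langle\nabla g_\mu(\pi),\bar\pi-\pi^+\rangle\le\langle G^\eta(\pi),\bar\pi-\pi^+\rangle$ for every feasible $\bar\pi$; decomposing $\bar\pi-\pi=(\bar\pi-\pi^+)+(\pi^+-\pi)$, handling $\langle\nabla g_\mu(\pi),\pi^+-\pi\rangle$ via the $L$-smoothness of $g_\mu$, and reusing the diameter bound yields the standard projected-gradient estimate $\max_{\bar\pi}\langle\bar\pi-\pi,\nabla g_\mu(\pi)\rangle\le(2L\eta+1)\sqrt{2|\SM|}\,\|G^\eta(\pi)\|_2$. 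Multiplying the three estimates, using $2L\eta+1\ge1$ to merge the $\kappa$ and $\|G^\eta(\pi)\|_2$ terms, and folding $\sqrt2$ into the leading constant gives the stated bound.

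The individual ingredients are off the shelf: the performance-difference lemma, the policy-gradient identity for the direct parametrization, and the projected-gradient stationarity bound (the third step is essentially the argument of Agarwal et~al.). The part that needs care — and where I expect the only real friction — is the federated bookkeeping in the first two steps: one must use a \emph{single} mismatch coefficient $\rho$ valid simultaneously for all $n$ environments against the \emph{same} global $\pi^\star$ (which is exactly what the nonnegativity of $h_k$ buys, since it legitimizes pulling $\|d^{\pi^\star}_{\mu,k}/\mu\|_\infty$ out of the expectation), and then trade an average of $n$ per-environment Frank--Wolfe gaps for the single Frank--Wolfe gap of $g_\mu$ plus $\mathrm{diam}(\Delta(\AM)^{\SM})\cdot\kappa$; this is precisely where heterogeneity enters, and it leans on both the additivity of the gap over states and the $\ell_2$-diameter bound $\sqrt{2|\SM|}$. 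One also has to assume $\mu$ (the common $d_0$) has full support so that $\rho$ is finite.
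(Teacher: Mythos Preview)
Your plan is correct and follows essentially the same three-stage structure as the paper: performance-difference lemma plus distribution mismatch to get $\rho$, diameter normalization over $\Delta(\AM)^{\SM}$, and a projected-gradient stationarity bound producing the $(2L\eta+1)$ factor. The one genuine difference is the order in which heterogeneity enters. You split $\nabla_\pi V_k^\pi=\nabla g_\mu+(\nabla_\pi V_k^\pi-\nabla g_\mu)$ at the Frank--Wolfe level, pick up $\kappa$ via Cauchy--Schwarz and the diameter, and then apply the projected-gradient bound \emph{once} to the averaged objective $g_\mu$. The paper instead applies the projected-gradient bound \emph{per environment} to obtain $\|G_k^\eta(\pi)\|_2$, and only afterwards introduces $\kappa$ through the non-expansiveness of the projection, $\|G_k^\eta(\pi)-G^\eta(\pi)\|_2\le\|\nabla_\pi V_k^\pi-\nabla g_\mu\|_2$. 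Both orderings are valid and yield the same constant; yours is marginally cleaner since it avoids invoking the non-expansiveness of the gradient mapping, while the paper's version keeps the argument ``per-environment'' as long as possible.
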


\begin{proof}
    By definition, we have:
    \begin{align*}
        \Delta(\pi)&=\frac{1}{n}\sum_{k=1}^{n}V^{\pi^*}_k(\mu)-\frac{1}{n}\sum_{k=1}^{n}V^{\pi}_k(\mu)\\
        &=\frac{1}{n}\sum_{k=1}^{n}\left(V^{\pi^*}_k(\mu)-V^{\pi}_k(\mu)\right)\\
        &=\frac{1}{n}\sum_{k=1}^{n}\frac{1}{1-\gamma}\mathbb{E}_{d_{\pi^*,\mu,k}}\langle\pi^*(\cdot|s), A^{\pi}_k(s,\cdot)\rangle\\
        &=\frac{1}{n}\sum_{k=1}^{n}\frac{1}{1-\gamma}\mathbb{E}_{d_{\pi^*,\mu,k}}\langle\pi^*(\cdot|s)-\pi(\cdot|s), A^{\pi}_k(s,\cdot)\rangle\\
        &=\frac{1}{n}\sum_{k=1}^{n}\frac{1}{1-\gamma}\mathbb{E}_{d_{\pi^*,\mu,k}}\langle\pi^*(\cdot|s)-\pi(\cdot|s), Q^{\pi}_k(s,\cdot)\rangle\\
        % &=\langle\pi^*-\pi,\frac{1}{(1-\gamma)n}\sum_{k=1}^{n}Q^\pi_k\odot d_{\pi^*,\mu,k}\rangle\\
        % &\le\langle\pi^*-\pi,\frac{\rho}{(1-\gamma)n}\sum_{k=1}^{n}Q^\pi_k\odot d_{\pi,\mu,k}\rangle\\
        % &\le\max_{\widetilde{\pi}}\langle\widetilde{\pi}-\pi,\frac{\rho}{(1-\gamma)n}\sum_{k=1}^{n}Q^\pi_k\odot d_{\pi,\mu,k}\rangle\\
        % &=\max_{\widetilde{\pi}}\langle\widetilde{\pi}-\pi,\frac{\rho}{n}\sum_{k=1}^{n}\nabla_\pi V^{\pi}_k(\mu)\rangle\\
        &\le\frac{1}{n}\sum_{k=1}^{n}\frac{1}{1-\gamma}\mathbb{E}_{d_{\pi^*,\mu,k}}\max_{\widetilde{\pi}}\langle\widetilde{\pi}(\cdot|s)-\pi(\cdot|s), Q^{\pi}_k(s,\cdot)\rangle\\
        &\le\frac{\rho}{n}\sum_{k=1}^{n} \max_{\widetilde{\pi}}\langle\widetilde{\pi}-\pi,\nabla_\pi V^\pi_k(\mu)\rangle\\
        &\le\frac{2\rho\sqrt{|\SM|}}{n}\sum_{k=1}^{n} \max_{\pi+\delta\in\Delta(\AM)^\SM, \|\delta\|_{2}\le1}\delta^T\nabla_\pi V^\pi_k(\mu)
    \end{align*}
    Denote $\pi_k^{+}=\pi+\eta G_k^\eta(\pi)$, we have:
    \begin{align*}
        &\max_{\pi+\delta\in\Delta(\AM)^\SM, \|\delta\|_{2}\le1}\delta^T\nabla_\pi V^\pi_k(\mu)\\
        &\le\left\|\nabla_\pi V^\pi_k(\mu)-\nabla_\pi V^{\pi_k^{+}}(\mu)\right\|_2+\max_{\pi+\delta\in\Delta(\AM)^\SM, \|\delta\|_{2}\le1}\delta^T\nabla_\pi V^{\pi_k^+}(\mu)\\
        &\le(2L\eta+1)\|G_k^\eta(\pi)\|_{2}
    \end{align*}
    Thus, we have:
    \begin{align*}
        \Delta(\pi)\le&\frac{2(2L\eta+1)\rho\sqrt{|\SM|}}{n}\sum_{k=1}^n \|G^\eta_k(\pi)\|_{2}\\
        \le&\frac{2(2L\eta+1)\rho\sqrt{|\SM|}}{n}\sum_{k=1}^n \|G^\eta_k(\pi)-G^\eta(\pi)\|_{2}+2(2L\eta+1)\rho\sqrt{|\SM|}\|G^\eta(\pi)\|_2\\
        \le&2(2L\eta+1)\rho\sqrt{|\SM|}\left(\kappa+\|G^\eta(\pi)\|_2\right)
    \end{align*}
\end{proof}

Denote the update rule as:
\begin{align*}
    &\pi_{t+1}^k=\pi_{t}^k+\eta_t G_k^{\eta_t}(\pi_t^k)\\
    &\pi_{t+1}^k = \left\{
        \begin{aligned}
            &\frac{1}{n}\sum_{k=1}^n \pi_{t+1}^k, \hspace{4pt}&\text{if $t+1\in\mathcal{I}_E$}\\
            &\pi_{t+1}^k, \hspace{4pt}&\text{if $t+1\not\in\mathcal{I}_E$}
        \end{aligned}
    \right.
\end{align*}
And we also denote $\bar{\pi}_t=\frac{1}{n}\sum_{k=1}^{n}\pi_{t}^k$ and $\bar{\pi}_{t+1}^+=\bar{\pi}+\eta_t G^{\eta_t}(\bar{\pi})$.

\begin{lem}[One step recursion]
    \label{lem: recursion}
    Measure the environment heterogeneity with $\kappa_2$, we have:
    \begin{align*}
        F(\bar{\pi}_{t+1})-F(\bar{\pi}_t)\ge& -\frac{\eta_t\kappa\sqrt{|\AM|}}{(1-\gamma)^2}-\frac{\eta_t L\sqrt{|\AM|}}{(1-\gamma)^2}\cdot\frac{1}{n}\sum_{k=1}^{n}\|\pi_t^k-\bar{\pi}_t\|_{2}+(\eta_t-\eta_t^2 L)\|G^{\eta_t}(\bar{\pi}_t)\|_2^2\\
        &-2\kappa^2\eta_t^2 L-\frac{2\eta_t^2L^3}{n^2}\left(\sum_{k=1}^n \|\pi_t^k-\bar{\pi}_t\|_2\right)^2 
    \end{align*}
\end{lem}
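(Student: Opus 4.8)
\textbf{Proof proposal for Lemma~\ref{lem: recursion}.}
The plan is to treat the aggregated iterate $\bar{\pi}_t$ as an approximate projected-gradient step on the objective $F(\pi) = \frac{1}{n}\sum_k V_k^\pi(\mu)$, where the approximation error comes from two sources: the heterogeneity of the local gradients (controlled by $\kappa_2$) and the drift of the local iterates $\pi_t^k$ away from their average $\bar{\pi}_t$ (controlled by $\frac{1}{n}\sum_k \|\pi_t^k - \bar{\pi}_t\|_2$). First I would write $\bar{\pi}_{t+1} = \frac{1}{n}\sum_k \pi_{t+1}^k = \bar{\pi}_t + \eta_t \cdot \frac{1}{n}\sum_k G_k^{\eta_t}(\pi_t^k)$, and compare this with the ``ideal'' step $\bar{\pi}_{t+1}^+ = \bar{\pi}_t + \eta_t G^{\eta_t}(\bar{\pi}_t)$. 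The key is to bound $\big\| \frac{1}{n}\sum_k G_k^{\eta_t}(\pi_t^k) - G^{\eta_t}(\bar{\pi}_t) \big\|_2$: inserting and removing $\frac{1}{n}\sum_k G_k^{\eta_t}(\bar{\pi}_t)$, the first piece is the heterogeneity term (bounded via $\kappa_2$ after relating $\nabla_\pi V^\pi_k$ to the gradient of the FedRL objective, which contributes the $\sqrt{|\AM|}/(1-\gamma)^2$ factors from the known norm bounds on policy gradients), and the second piece is bounded using nonexpansiveness of the projection together with the $L$-smoothness of $V_k^\pi$, giving a term proportional to $\frac{1}{n}\sum_k \|\pi_t^k - \bar{\pi}_t\|_2$.

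Next I would invoke the $L$-smoothness of $F$ to get the descent-type inequality
\begin{align*}
    F(\bar{\pi}_{t+1}) - F(\bar{\pi}_t) \;\ge\; \langle \nabla F(\bar{\pi}_t),\, \bar{\pi}_{t+1} - \bar{\pi}_t\rangle - \frac{L}{2}\|\bar{\pi}_{t+1} - \bar{\pi}_t\|_2^2 .
\end{align*}
Then I would substitute $\bar{\pi}_{t+1} - \bar{\pi}_t = \eta_t G^{\eta_t}(\bar{\pi}_t) + \eta_t\big(\frac{1}{n}\sum_k G_k^{\eta_t}(\pi_t^k) - G^{\eta_t}(\bar{\pi}_t)\big)$. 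For the inner-product term, the ``clean'' part $\eta_t\langle \nabla F(\bar{\pi}_t), G^{\eta_t}(\bar{\pi}_t)\rangle$ is handled by the standard gradient-mapping identity (projected-gradient property), which lower-bounds it by something of order $\eta_t \|G^{\eta_t}(\bar{\pi}_t)\|_2^2$; the ``error'' part is bounded below by $-\eta_t \|\nabla F(\bar{\pi}_t)\|_2 \cdot (\text{heterogeneity + drift})$, which after using the uniform bound on $\|\nabla F\|$ yields the $-\frac{\eta_t \kappa_2 \sqrt{|\AM|}}{(1-\gamma)^2}$ and $-\frac{\eta_t L \sqrt{|\AM|}}{(1-\gamma)^2}\cdot \frac{1}{n}\sum_k\|\pi_t^k - \bar{\pi}_t\|_2$ terms. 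For the quadratic term $-\frac{L}{2}\|\bar{\pi}_{t+1}-\bar{\pi}_t\|_2^2$, I would expand $\|a+b\|_2^2 \le 2\|a\|^2 + 2\|b\|^2$ with $a = \eta_t G^{\eta_t}(\bar{\pi}_t)$ and $b$ the error; the first gives $-\eta_t^2 L \|G^{\eta_t}(\bar{\pi}_t)\|_2^2$ (combining with the earlier $+\eta_t\|G^{\eta_t}\|^2$ to form $(\eta_t - \eta_t^2 L)\|G^{\eta_t}(\bar{\pi}_t)\|_2^2$), and the second, after again splitting heterogeneity vs.\ drift and using $(x+y)^2 \le 2x^2 + 2y^2$, gives the $-2\kappa_2^2 \eta_t^2 L$ and $-\frac{2\eta_t^2 L^3}{n^2}\big(\sum_k \|\pi_t^k - \bar{\pi}_t\|_2\big)^2$ terms.

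The main obstacle will be the careful bookkeeping of the gradient-mapping inequality in the constrained (projected) setting: unlike the unconstrained case, $G^{\eta}(\pi)$ is not the gradient, so I must use the prox-inequality $\langle \nabla F(\pi) - G^\eta(\pi),\, \pi^+ - \pi\rangle \le 0$ (i.e., the variational characterization of the projection) to convert $\langle \nabla F(\bar{\pi}_t), \eta_t G^{\eta_t}(\bar{\pi}_t)\rangle$ into $\eta_t\|G^{\eta_t}(\bar{\pi}_t)\|_2^2$ plus a controllable remainder — and I need this to interact cleanly with the smoothness bound so that the coefficient on $\|G^{\eta_t}(\bar{\pi}_t)\|_2^2$ ends up exactly $\eta_t - \eta_t^2 L$. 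A secondary technical point is ensuring the policy-gradient norm bounds (the $\sqrt{|\AM|}/(1-\gamma)^2$ and the $L$-smoothness constant) are applied consistently, since these are where the dimension and horizon factors enter; I would quote the standard bounds on $\|\nabla_\pi V_k^\pi\|$ and the smoothness of $V_k^\pi$ (as in the policy-gradient literature referenced earlier) rather than re-deriving them. Once the one-step recursion is in hand, summing it over $t$ with the prescribed step size $\eta_t$ and plugging in the drift bound (an analogue of Lemma~\ref{lem: Q_var} for policies) will yield Theorem~\ref{pavg-converge}.
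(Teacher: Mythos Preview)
Your proposal is correct and follows essentially the same route as the paper: $L$-smoothness of $F$, the decomposition $\bar{\pi}_{t+1}-\bar{\pi}_t = \eta_t G^{\eta_t}(\bar{\pi}_t) + \eta_t(\text{error})$ with the error split into a heterogeneity piece ($\kappa_2$) and a drift piece ($L\cdot\tfrac{1}{n}\sum_k\|\pi_t^k-\bar{\pi}_t\|_2$), the projection first-order condition to turn $\langle\nabla F, G^{\eta_t}\rangle$ into $\|G^{\eta_t}\|_2^2$, and $\|a+b\|^2\le 2\|a\|^2+2\|b\|^2$ on the quadratic term. One small caution: the prox inequality you wrote has the sign flipped---the correct statement is $\langle G^{\eta}(\pi)-\nabla F(\pi),\,G^{\eta}(\pi)\rangle\le 0$, which is what actually gives $\langle\nabla F, G^{\eta}\rangle\ge\|G^{\eta}\|_2^2$ (and note that the $\sqrt{|\AM|}/(1-\gamma)^2$ factor enters via the bound on $\|\nabla F\|_2$ in the cross term, not via the $\kappa_2$ bound itself).
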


\begin{proof}
    WLOG, we denote $F_{k}(\pi)=V^{\pi}_k(\mu)$ and $F(\pi)=\frac{1}{n}\sum_{k=1}^n F_k(\pi)$. By L-smoothness, we have:
    \begin{align*}
        F(\bar{\pi}_{t+1})-F(\bar{\pi}_t)\ge\langle\nabla F(\bar{\pi}_t), \bar{\pi}_{t+1}-\bar{\pi}_{t}\rangle-\frac{L}{2}\|\bar{\pi}_{t+1}-\bar{\pi}_t\|_{2}^{2}
    \end{align*}
    Noting that $\bar{\pi}_{t+1}-\bar{\pi}_{t}=\eta_t\frac{1}{n}\sum_{k=1}^{n}G_k^{\eta_t}(\pi_{t}^k)$, we have:
    \begin{align*}
        F(\bar{\pi}_{t+1})-F(\bar{\pi}_t)\ge&\eta_t\langle\nabla F(\bar{\pi}_t), \frac{1}{n}\sum_{k=1}^{n}G_k^{\eta_t}(\pi_{t}^k)\rangle-\frac{\eta_t^2 L}{2}\|\frac{1}{n}\sum_{k=1}^{n}G_k^{\eta_t}(\pi_{t}^k)\|_{2}^{2}
    \end{align*}
    As $\bar{\pi}_{t+1}^+=\bar{\pi}_t+\eta_t G^{\eta_t}(\bar{\pi}_t)$ and the first order stationary condition, we have:
    \begin{align*}
        \langle\bar{\pi}_{t+1}^+-\bar{\pi}_t-\eta_t\nabla F(\bar{\pi}_t),\bar{\pi}_{t+1}^+-\bar{\pi}_t\rangle\le0
    \end{align*}
    which is equivalent with:
    \begin{align*}
        \langle G^{\eta_t}(\bar{\pi}_t)-\nabla F(\bar{\pi}_t), G^{\eta_t}(\bar{\pi}_t)\rangle\le0
    \end{align*}
    Therefore, we have:
    \begin{align*}
        F(\bar{\pi}_{t+1})-F(\bar{\pi}_t)\ge&\eta_t\langle\nabla F(\bar{\pi}_t), \frac{1}{n}\sum_{k=1}^{n}G_k^{\eta_t}(\pi_{t}^k)-G^{\eta_t}(\bar{\pi}_t)\rangle\\
        &+\eta_t\|G^{\eta_t}(\bar{\pi}_t)\|_{2}^{2}-\frac{\eta_t^2 L}{2}\|\frac{1}{n}\sum_{k=1}^{n}G_k^{\eta_t}(\pi_{t}^k)\|_{2}^{2}\\
        \ge&-\eta_t\|\nabla F(\bar{\pi}_t)\|_{2}\cdot\|\frac{1}{n}\sum_{k=1}^{n}G_k^{\eta_t}(\pi_{t}^k)-G^{\eta_t}(\bar{\pi}_t)\|_{2}\\
        &+\eta_t\|G^{\eta_t}(\bar{\pi}_t)\|_{2}^{2}-\frac{\eta_t^2 L}{2}\|\frac{1}{n}\sum_{k=1}^{n}G_k^{\eta_t}(\pi_{t}^k)\|_{2}^{2}
    \end{align*}
    Noting that:
    \begin{align*}
        \|\frac{1}{n}\sum_{k=1}^{n}G_k^{\eta_t}(\pi_{t}^k)-G^{\eta_t}(\bar{\pi}_t)\|_{2}\le&\|\frac{1}{n}\sum_{k=1}^{n}G_k^{\eta_t}(\pi_{t}^k)-G^{\eta_t}(\pi_t^k)\|_{2}\\
        &+\|\frac{1}{n}\sum_{k=1}^{n}G^{\eta_t}(\pi_{t}^k)-G^{\eta_t}(\bar{\pi}_t)\|_{2}\\
        \le&\kappa+\frac{L}{n}\sum_{k=1}^{n}\|\pi_t^{k}-\bar{\pi}_t\|_{2}
    \end{align*}
    Noting that $\|\nabla F\|_{2}\le\frac{\sqrt{|\AM|}}{(1-\gamma)^2}$, we have:
    \begin{align*}
        F(\bar{\pi}_{t+1})-F(\bar{\pi}_t)\ge& -\frac{\eta_t\kappa\sqrt{|\AM|}}{(1-\gamma)^2}-\frac{\eta_t L\sqrt{|\AM|}}{(1-\gamma)^2}\cdot\frac{1}{n}\sum_{k=1}^{n}\|\pi_t^k-\bar{\pi}_t\|_{2}\\
        &+\eta_t\|G^{\eta_t}(\bar{\pi}_t)\|_2^2-\frac{\eta_t^2 L}{2}\|\frac{1}{n}\sum_{k=1}^{n}G_k^{\eta_t}(\pi_{t}^k)\|_{2}^{2}
    \end{align*}
    Besides, by $\|a+b\|_{2}^2\le2\|a\|_2^2+2\|b\|_2^2$, we have:
    \begin{align*}
        \frac{1}{2}\|\frac{1}{n}\sum_{k=1}^{n}G_k^{\eta_t}(\pi_{t}^k)\|_{2}^{2}&\le\|\frac{1}{n}\sum_{k=1}^{n}G_k^{\eta_t}(\pi_{t}^k)-G^{\eta_t}(\bar{\pi}_t)\|_2^2+\|G^{\eta_t}(\bar{\pi}_t)\|_2^2\\
        &\le(\kappa+\frac{L}{n}\sum_{k=1}^{n}\|\pi_t^{k}-\bar{\pi}_t\|_{2})^2+\|G^{\eta_t}(\bar{\pi}_t)\|_2^2\\
        &\le2\kappa^2+\frac{2L^2}{n^2}\left(\sum_{k=1}^n \|\pi_t^k-\bar{\pi}_t\|_2\right)^2+\|G^{\eta_t}(\bar{\pi}_t)\|_2^2
    \end{align*}
    Gathering all these together, we have:
    \begin{align*}
        F(\bar{\pi}_{t+1})-F(\bar{\pi}_t)\ge& -\frac{\eta_t\kappa\sqrt{|\AM|}}{(1-\gamma)^2}-\frac{\eta_t L\sqrt{|\AM|}}{(1-\gamma)^2}\cdot\frac{1}{n}\sum_{k=1}^{n}\|\pi_t^k-\bar{\pi}_t\|_{2}\\
        &+(\eta_t-\eta_t^2 L)\|G^{\eta_t}(\bar{\pi}_t)\|_2^2\\
        &-2\kappa^2\eta_t^2 L-\frac{2\eta_t^2L^3}{n^2}\left(\sum_{k=1}^n \|\pi_t^k-\bar{\pi}_t\|_2\right)^2 
    \end{align*}
\end{proof}

\begin{lem}[Policy Variance]
    \label{lem: pivar}
    By $\|\nabla F_k\|_2\le\frac{\sqrt{\AM}}{(1-\gamma)^2}$ and assuming $\eta_t\le 2\eta_{t+E}$, we have:
    \begin{align*}
        \frac{1}{n}\sum_{k=1}^n \|\pi_t^k-\bar{\pi}_t\|_2^2\le \frac{4\eta_t^2(E-1)^2|\AM|}{(1-\gamma)^4}
    \end{align*}
\end{lem}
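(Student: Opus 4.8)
The plan is to follow exactly the template of Lemma~\ref{lem: Q_var} (the value-variance bound for \texttt{QAvg}), replacing the $\ell_\infty$/contraction machinery by the projected-gradient-mapping machinery in $\ell_2$. The one ingredient that must be established first is a uniform bound on the gradient mapping: for every $k$, every $\pi\in\Delta(\AM)^\SM$, and every step size $\eta$, $\|G_k^\eta(\pi)\|_2 \le \|\nabla F_k(\pi)\|_2 \le \frac{\sqrt{|\AM|}}{(1-\gamma)^2}$. The first inequality is immediate from non-expansiveness of the Euclidean projection onto the convex set $\Delta(\AM)^\SM$: since $\pi$ already lies in that set, $\proj(\pi)=\pi$, so $\|G_k^\eta(\pi)\|_2 = \eta^{-1}\|\proj(\pi+\eta\nabla F_k(\pi)) - \proj(\pi)\|_2 \le \eta^{-1}\|\eta\nabla F_k(\pi)\|_2$; the second is the hypothesis already recorded in the statement. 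The same bound holds for $G^{\eta}$ by convexity of the norm.

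Next I would run the standard ``bounded client drift since the last synchronization'' argument. Fix an iteration $t$ and let $t_0 \le t$ be the most recent index in $\mathcal{I}_E$, so that $t - t_0 \le E-1$ and, crucially, $\pi_{t_0}^k = \bar\pi_{t_0}$ for all $k$ (no aggregation occurs strictly between $t_0$ and $t$). Unrolling the local update rule gives $\pi_t^k - \bar\pi_{t_0} = \sum_{t'=t_0}^{t-1} \eta_{t'} G_k^{\eta_{t'}}(\pi_{t'}^k)$, so by the triangle inequality and the gradient-mapping bound, $\|\pi_t^k - \bar\pi_{t_0}\|_2 \le \frac{\sqrt{|\AM|}}{(1-\gamma)^2}\sum_{t'=t_0}^{t-1}\eta_{t'}$. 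Using that the step sizes are decreasing together with the hypothesis $\eta_t \le 2\eta_{t+E}$ (which forces $\eta_{t'} \le 2\eta_t$ for all $t'\in[t_0,t-1]$, since $t_0 \ge t-E+1$) and $t-t_0\le E-1$, this sum is at most $2(E-1)\eta_t$, whence $\|\pi_t^k - \bar\pi_{t_0}\|_2^2 \le \frac{4\eta_t^2(E-1)^2|\AM|}{(1-\gamma)^4}$.

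Finally I would convert this per-client bound into the claimed variance bound using that $\bar\pi_t = \frac1n\sum_k \pi_t^k$ minimizes $c\mapsto \sum_k\|\pi_t^k-c\|_2^2$: indeed $\frac1n\sum_k\|\pi_t^k-\bar\pi_t\|_2^2 \le \frac1n\sum_k\|\pi_t^k-\bar\pi_{t_0}\|_2^2 \le \frac{4\eta_t^2(E-1)^2|\AM|}{(1-\gamma)^4}$, which is the assertion. I expect the only mildly delicate points to be the projected-gradient-mapping inequality and the bookkeeping that turns $\eta_t \le 2\eta_{t+E}$ plus monotonicity into $\eta_{t'}\le 2\eta_t$ across the whole local round (including the $t<E$ edge case, where $t_0=0$ and all $\pi_0^k$ coincide); the rest is routine triangle-inequality estimation mirroring the \texttt{QAvg} value-variance lemma.
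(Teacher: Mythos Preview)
Your proposal is correct and follows essentially the same route as the paper: anchor at the last synchronization time $t_0$, use $\frac{1}{n}\sum_k\|\pi_t^k-\bar\pi_t\|_2^2\le\frac{1}{n}\sum_k\|\pi_t^k-\bar\pi_{t_0}\|_2^2$, unroll the local updates, bound each $\|G_k^{\eta_{t'}}\|_2\le\|\nabla F_k\|_2\le\frac{\sqrt{|\AM|}}{(1-\gamma)^2}$, and use $\eta_{t'}\le 2\eta_t$ across the local round. The only cosmetic difference is that the paper applies Cauchy--Schwarz to the sum of $E{-}1$ increments before bounding, whereas you take a triangle inequality and then square; both give the identical constant $4(E-1)^2$, and your explicit justification of $\|G_k^\eta(\pi)\|_2\le\|\nabla F_k(\pi)\|_2$ via projection non-expansiveness is exactly what the paper invokes without proof.
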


\begin{proof}
    For $\forall t$, there exists $t_0\le t$ and $t-t_0\le E-1$, such that $\pi_{t_0}^k=\bar{\pi}_{t_0}$. Thus we have:
    \begin{align*}
        \frac{1}{n}\sum_{k=1}^{n} \|\pi_t^k-\bar{\pi}_t\|_{2}^2&=\frac{1}{n}\sum_{k=1}^{n} \|\pi_t^k-\bar{\pi}_{t_0}+\bar{\pi}_{t_0}-\bar{\pi}_t\|_{2}^2\\
        &\le\frac{1}{n}\sum_{k=1}^n \|\pi_t^k-\bar{\pi}_{t_0}\|_2^2
    \end{align*}
    where the last inequality holds by $\mathbb{E}\|X-\mathbb{E}X\|_2^2\le\mathbb{E}\|X\|_{2}^2$. Noting that:
    \begin{align*}
        \pi_t^k = \pi_{t_0}^k+\sum_{t'=t_0}^{t-1}\eta_{t'} G_k^{\eta_{t'}}(\pi_{t'-1}^k)
    \end{align*}
    Thus, we have:
    \begin{align*}
        \frac{1}{n}\sum_{k=1}^{n} \|\pi_t^k-\bar{\pi}_t\|_{2}^2&\le\frac{1}{n}\sum_{k=1}^n \|\sum_{t'=t_0}^{t-1}\eta_{t'} G_k^{\eta_{t'}}(\pi_{t'-1}^k)\|_{2}^2\\
        &\le\frac{1}{n}\sum_{k=1}^n (t-t_0)\sum_{t'=t_0}^{t-1}\eta_{t'}^2\|G_k^{\eta_{t'}}(\pi_{t'-1}^k)\|_{2}^2\\
        &\le \frac{4\eta_t^2(E-1)^2|\AM|}{(1-\gamma)^4}
    \end{align*}
    where the last inequality holds by $\|G^\eta_k(\pi)\|_{2}\le\|\nabla F_k(\pi)\|_{2}$
\end{proof}

\begin{thm}[Full version of Theorem \ref{pavg-converge}]
    By setting $\eta_t=\sqrt{\frac{E}{12L^2(t+E/3)}}$, we have:
    \begin{align*}
        \min_{t=0,...,T-1}\|G^{\eta_t}(\bar{\pi}_t)\|_2^2\le&\frac{2\kappa\sqrt{|\AM|}}{(1-\gamma)^2}+\sqrt{\frac{24L^2}{E}}\cdot\frac{F(\bar{\pi}_T)-F(\bar{\pi}_0)}{\sqrt{T}}\\
        &+\sqrt{\frac{E}{6L^2}}\cdot\left(\frac{2 E(E-1)|\AM|L}{(1-\gamma)^4}+2\kappa^2 L\right)\cdot\frac{\log(1+\frac{3T}{E})}{\sqrt{T}}\\
        &=\widetilde{O}\left(\frac{\kappa\sqrt{|\AM|}}{(1-\gamma)^2}+\frac{|\AM|L}{\sqrt{T}}\right)
    \end{align*}
    where $\widetilde{O}(\cdot)$ omits logarithmic terms and some constants.
\end{thm}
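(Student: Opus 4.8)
The plan is to carry out the classical ``per-round descent inequality $\to$ telescope $\to$ divide by the step-size mass'' argument for projected gradient ascent with federated-drift-corrupted iterates, with Lemma~\ref{lem: recursion} supplying the descent inequality and Lemma~\ref{lem: pivar} controlling the drift $\frac1n\sum_k\|\pi_t^k-\bar{\pi}_t\|_2$. The first step is to check that the prescribed step size meets the two hypotheses those lemmas need. From $\eta_t^2=\frac{E}{12L^2(t+E/3)}\le\frac{E}{12L^2\cdot E/3}=\frac1{4L^2}$ we get $\eta_t\le\frac1{2L}$, hence $\eta_t-\eta_t^2L\ge\tfrac12\eta_t$ and $\eta_t^2L^2\le\tfrac14$; and since $\frac{\eta_t^2}{\eta_{t+E}^2}=\frac{t+E+E/3}{t+E/3}\le4$ for every $t\ge0$, the requirement $\eta_t\le2\eta_{t+E}$ of Lemma~\ref{lem: pivar} holds throughout. (The $E/3$ offset is exactly what makes both of these tight.)

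Next I would feed Lemma~\ref{lem: pivar} into Lemma~\ref{lem: recursion}. Passing between the $\ell_2$ and squared-$\ell_2$ deviations by Jensen / Cauchy--Schwarz gives $\frac1n\sum_k\|\pi_t^k-\bar{\pi}_t\|_2\le\big(\frac1n\sum_k\|\pi_t^k-\bar{\pi}_t\|_2^2\big)^{1/2}\le\frac{2\eta_t(E-1)\sqrt{|\AM|}}{(1-\gamma)^2}$ and $\frac1{n^2}\big(\sum_k\|\pi_t^k-\bar{\pi}_t\|_2\big)^2\le\frac1n\sum_k\|\pi_t^k-\bar{\pi}_t\|_2^2\le\frac{4\eta_t^2(E-1)^2|\AM|}{(1-\gamma)^4}$. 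Substituting, and using $\eta_t^2L^2\le\tfrac14$ to reduce the resulting $\eta_t^4L^3(E-1)^2$ drift term to order $\eta_t^2L(E-1)^2$ so that it merges with the $\eta_t^2L(E-1)$ term into $\eta_t^2L\,E(E-1)$, the recursion collapses (using also $\eta_t-\eta_t^2L\ge\tfrac12\eta_t$) to
\begin{align*}
    \tfrac12\,\eta_t\,\big\|G^{\eta_t}(\bar{\pi}_t)\big\|_2^2 \;\le\; F(\bar{\pi}_{t+1})-F(\bar{\pi}_t) \,+\, \frac{\eta_t\kappa\sqrt{|\AM|}}{(1-\gamma)^2} \,+\, \eta_t^2\Big(\frac{2E(E-1)|\AM|L}{(1-\gamma)^4}+2\kappa^2L\Big).
\end{align*}

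Summing this over $t=0,\dots,T-1$, the $F$-increments telescope to $F(\bar{\pi}_T)-F(\bar{\pi}_0)$, while on the left $\sum_t\eta_t\|G^{\eta_t}(\bar{\pi}_t)\|_2^2\ge\big(\min_{t<T}\|G^{\eta_t}(\bar{\pi}_t)\|_2^2\big)\sum_t\eta_t$; dividing by $\tfrac12\sum_t\eta_t$ bounds $\min_{t<T}\|G^{\eta_t}(\bar{\pi}_t)\|_2^2$ by $\frac{2\kappa\sqrt{|\AM|}}{(1-\gamma)^2}+\frac{2(F(\bar{\pi}_T)-F(\bar{\pi}_0))}{\sum_t\eta_t}+\frac{2\sum_t\eta_t^2}{\sum_t\eta_t}\big(\frac{2E(E-1)|\AM|L}{(1-\gamma)^4}+2\kappa^2L\big)$. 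What is left is to estimate the two step-size sums by comparison with integrals of $x\mapsto(x+E/3)^{-1/2}$ and $x\mapsto(x+E/3)^{-1}$: this yields $\sum_{t=0}^{T-1}\eta_t=\sqrt{\tfrac{E}{12L^2}}\sum_t(t+E/3)^{-1/2}\ge\sqrt{\tfrac{ET}{6L^2}}$ (in the relevant regime $T\gtrsim E$) and $\sum_{t=0}^{T-1}\eta_t^2=\tfrac{E}{12L^2}\sum_t(t+E/3)^{-1}\le\tfrac{E}{12L^2}\log\!\big(1+\tfrac{3T}{E}\big)$; plugging in produces exactly the coefficients $\sqrt{24L^2/E}$ and $\sqrt{E/(6L^2)}$ of the statement, and the $\widetilde O(\cdot)$ form follows since every surviving $E$-factor is absorbed by the constant and the logarithm. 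Finally, combining the resulting bound with the Global Optimality theorem --- via $\min_t\|G^{\eta_t}(\bar{\pi}_t)\|_2=\big(\min_t\|G^{\eta_t}(\bar{\pi}_t)\|_2^2\big)^{1/2}$ and $\sqrt{a+b}\le\sqrt a+\sqrt b$ --- recovers the $g_{d_0}(\pi_{d_0}^\star)-O(\kappa_2+T^{-1/2})$ form quoted as Theorem~\ref{pavg-converge} in the main text.

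I expect the main obstacle to be bookkeeping rather than anything conceptual: threading the constants cleanly through the substitution of Lemma~\ref{lem: pivar} (in particular collapsing the $\eta_t^4$ drift term to the advertised $E(E-1)$ order) and pinning down the lower bound on $\sum_t\eta_t$ so that it is genuinely of order $\sqrt{ET/L^2}$ with the stated leading constant --- the $E/3$ shift in the denominator being chosen precisely so that $\sum\eta_t^2$ telescopes against $\log(1+3T/E)$ while keeping $\eta_t\le2\eta_{t+E}$ for all $t$. Once these two elementary estimates are nailed down, the rest is assembly.
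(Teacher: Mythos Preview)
Your proposal is correct and follows essentially the same route as the paper: verify the step-size conditions, plug Lemma~\ref{lem: pivar} into Lemma~\ref{lem: recursion} (using $\eta_t^2L^2\le\tfrac14$ to merge the $(E-1)$ and $(E-1)^2$ drift terms into $E(E-1)$), telescope, and bound the two step-size sums by the stated integrals. The only discrepancies are small constant factors---in particular your factor of $2$ in front of $(F(\bar\pi_T)-F(\bar\pi_0))/\sum_t\eta_t$ and $\sum_t\eta_t^2/\sum_t\eta_t$ is the correct one, whereas the paper silently drops it in the intermediate display, and your claimed lower bound $\sum_t\eta_t\ge\sqrt{ET/(6L^2)}$ is slightly optimistic (the paper's $\sqrt{ET/(24L^2)}$ is what actually falls out of $\sum_t(t+E/3)^{-1/2}\ge\sqrt{T/2}$)---but none of this affects the argument or the $\widetilde O$ conclusion.
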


\begin{proof}
    By Lemma~\ref{lem: recursion}, Lemma~\ref{lem: pivar} and $\eta_t\le\frac{1}{2L}$, we have:
    \begin{align*}
        F(\bar{\pi}_{t+1})-F(\bar{\pi}_t)\ge& -\frac{\eta_t\kappa\sqrt{|\AM|}}{(1-\gamma)^2}-\frac{\eta_t L\sqrt{|\AM|}}{(1-\gamma)^2}\cdot\frac{1}{n}\sum_{k=1}^{n}\|\pi_t^k-\bar{\pi}_t\|_{2}\\
        &+(\eta_t-\eta_t^2 L)\|G^{\eta_t}(\bar{\pi}_t)\|_2^2\\
        &-2\kappa^2\eta_t^2 L-\frac{2\eta_t^2L^3}{n^2}\left(\sum_{k=1}^n \|\pi_t^k-\bar{\pi}_t\|_2\right)^2 \\
        \ge&-\frac{\eta_t\kappa\sqrt{|\AM|}}{(1-\gamma)^2}-\frac{2\eta_t^2 (E-1) L|\AM|}{(1-\gamma)^4}\\
        &+\frac{\eta_t}{2}\|G^{\eta_t}(\bar{\pi}_t)\|_2^2-2\kappa^2\eta_t^2 L-\frac{8\eta_t^4(E-1)^2|\AM|L^3}{(1-\gamma)^4}\\
        \ge&-\frac{\eta_t\kappa\sqrt{|\AM|}}{(1-\gamma)^2}-\frac{2\eta_t^2 E(E-1)|\AM|L}{(1-\gamma)^4}\\
        &-2\kappa^2\eta_t^2 L+\frac{\eta_t}{2}\|G^{\eta_t}(\bar{\pi}_t)\|_2^2
    \end{align*}
    Summing over $t=0,1,...,T-1$, we have:
    \begin{align*}
        F(\bar{\pi}_{T})-F(\bar{\pi}_{0})\ge& -\frac{\kappa\sqrt{|\AM|}}{(1-\gamma)^2}\sum_{t=0}^{T-1}\eta_t-\frac{2 E(E-1)|\AM|L}{(1-\gamma)^4}\sum_{t=0}^{T-1}\eta_t^2\\
        &-2\kappa^2 L\sum_{t=0}^{T-1}\eta_t^2+\min_{t=0,...,T-1}\|G^{\eta_t}(\bar{\pi}_t)\|_2^2\sum_{t=0}^{T-1}\frac{\eta_t}{2}
    \end{align*}
    Re-aranging above inequality, we have:
    \begin{align*}
        \min_{t=0,...,T-1}\|G^{\eta_t}(\bar{\pi}_t)\|_2^2\le&\frac{2\kappa\sqrt{|\AM|}}{(1-\gamma)^2}+\frac{F(\bar{\pi}_{T})-F(\bar{\pi}_{0})}{\sum_{t=0}^{T-1}\eta_t}\\
        &+\left(\frac{2 E(E-1)|\AM|L}{(1-\gamma)^4}+2\kappa^2 L\right)\cdot\frac{\sum_{t=0}^{T-1}\eta_t^2}{\sum_{t=0}^{T-1}\eta_t}
    \end{align*}
    By setting $\eta_t=\sqrt{\frac{E}{12L^2(t+E/3)}}$, which satisfying $\eta_t\le\frac{1}{2L}$ and $\eta_t\le2\eta_{t+E}$, we have
    \begin{align*}
        \min_{t=0,...,T-1}\|G^{\eta_t}(\bar{\pi}_t)\|_2^2\le&\frac{2\kappa\sqrt{|\AM|}}{(1-\gamma)^2}+\sqrt{\frac{24L^2}{E}}\cdot\frac{F(\bar{\pi}_T)-F(\bar{\pi}_0)}{\sqrt{T}}\\
        &+\sqrt{\frac{E}{6L^2}}\cdot\left(\frac{2 E(E-1)|\AM|L}{(1-\gamma)^4}+2\kappa^2 L\right)\cdot\frac{\log(1+\frac{3T}{E})}{\sqrt{T}}
    \end{align*}
    where we use the following inequalities:
    \begin{align*}
        \sum_{t=0}^{T-1}\frac{1}{\sqrt{t+a}}&\ge2(\sqrt{T+a}-\sqrt{a})=\frac{2T}{\sqrt{T+a}+\sqrt{a}}\ge\sqrt{\frac{T}{2}},\\
        \sum_{t=0}^{T-1}\frac{1}{t+a}&\le\sum_{t=0}^{T-1}\log\left(1+\frac{1}{t+a}\right)=\log\frac{T+a}{a}.
    \end{align*}
\end{proof}

\section{Details of Empirical Results}
\subsection{Details of constructed environments.}
We construct tabular environments for ablation study on choices of $E$, and additionally modify several classical control tasks to evaluate deep methods and personalization heuristics.
\begin{itemize}
    \item \texttt{Random MDPs} is composed of $n$ environment, \texttt{Random MDP}.
    \texttt{Random MDPs} fix a randomly chosen reward function $R$ and generate a set of transition dynamics for each environment.
    Specifically, we set $N=5$ in the task of FedRL, and additionally sample $M=20$ transition dynamics (element-wisely Bernoulli distributed), \texttt{i.e.} 20 novel environments of \texttt{Random MDP} with same $R$, to test performance of generalization; we set $\gamma=0.9$; when testing the impact of environment heterogeneity, we evaluate \texttt{QAvg} and \texttt{SoftPAvg} with $E=4$, and \texttt{ProjPAvg} with $E=32$.
    
    \item \texttt{Windy Cliffs} is composed of $n$ environment, \texttt{Windy Cliff}.
    \texttt{Windy Cliff} is a modified version of a classic gridworld example from \cite{sutton1998introduction}: \texttt{Cliff Walking} environment. 
    The agent is expected to arrive the goal as fast as possible while avoiding falling off the cliff. 
    Just like the modified version considered in \cite{paul2019fingerprint}, we introduce a structured random noise in the environment, intensity $\theta$ of wind blowing from north.
    Specifically, $\theta$ is uniformly sampled from $U_{[0,1]}$, which means the agent could end up going down even if she does not intend to do that with a probability of $\frac{\theta}{3}$.
    In our setting, we experiment with the map of size $4\times 4$, and set the reward as $100$ and $-100$ for achieving the goal and falling off the cliff.
    Similarly, we set $n=5$ in the task of FedRL, and sample 20 novel environments of \texttt{Windy Cliff} to test performance of generalization; we set $\gamma=0.95$; when testing the impact of environment heterogeneity, we evaluate \texttt{QAvg} and \texttt{SoftPAvg} with $E=4$, and \texttt{ProjPAvg} with $E=32$.
    
    \item \texttt{CartPoles}: We construct \texttt{CartPoles} from \texttt{CartPole}.
    Different pole length indicates different pole mass, which leads to different state transition.
    Specifically, the pole length follows the uniform distribution $\mathcal{U}_{[0.2,1.8]}$.
    Additionally, we choose $N=5$, $M=20$ and $\gamma=0.99$ in the construction of FedRL.
    
    \item \texttt{Acrobats}: We construct \texttt{Acrobats} from \texttt{Acrobat}.
    Specifically, the mass of pole 1 follows the uniform distribution $\mathcal{U}_{[0.5,1.5]}$ when its pole length is fixed.
    Additionally, we choose $N=5$, $M=20$ and $\gamma=0.99$ in the construction of FedRL.
    
    \item \texttt{Halfcheetahs}: We construct \texttt{Halfcheetahs} from \texttt{Halfcheetah}.
    Specifically, the pole length of \texttt{bthigh} follows the uniform distribution $\mathcal{U}_{[0.1005,0.1855]}$, and the pole length of \texttt{fthigh} follows the uniform distribution $\mathcal{U}_{[0.1005,0.1655]}$.
    Additionally, we choose $N=5$, $M=20$ and $\gamma=0.99$ in the construction of FedRL.
    
    \item \texttt{Hoppers}: We construct \texttt{Hoppers} from \texttt{Hopper}.
    Specifically, the leg size follows the uniform distribution $\mathcal{U}_{[0.03,0.05]}$.
    Additionally, we choose $N=5$, $M=20$ and $\gamma=0.99$ in the construction of FedRL.
\end{itemize}

\end{document}